\newcommand*{\Scale}[2][4]{\scalebox{#1}{\ensuremath{#2}}} 
\newtheorem{defi}{Definition}%
\newtheorem{prop}{Proposition}
\newtheorem{cor}{Corollary}
\newtheorem{obs}{Observation}
\def \1{{\mathds{1}}}
\def \bx{\boldsymbol{x}}
\def \bp{\boldsymbol{p}}
\def \bq{\boldsymbol{q}}
\def \bz{\boldsymbol{z}}
\def \bv{\boldsymbol{v}}
\def\bU{\boldsymbol{U}}
\def \bB{\boldsymbol{B}}
\def \bC{\boldsymbol{C}}
\def \bV{\boldsymbol{V}}
\def \bc{\boldsymbol{c}}
\def \o{{\boldsymbol{\omega}}}
\def \O{{\boldsymbol{\Omega}}}
\def \phat{\widehat{\boldsymbol{p}}}
\def \qhat{\widehat{\boldsymbol{q}}}
\def \Var{\mathcal{V}}
\def \Reg{\mathcal{R}}
\def \ineqnew{\mathop{\Scale[1.1]{\lessgtr}}_{\Scale[0.65]{\omega_{\ell}=1}}^{\Scale[0.65]{\omega_{\ell}=-1}}}
\def\R{{\mathbb R}}
\DeclareMathOperator*{\argmin}{arg\,min}
\title{Geometry of the Minimum Volume Confidence Sets}
\author{
\IEEEauthorblockN{Heguang Lin$^\dagger$}
\IEEEauthorblockA{
hlin324@wisc.edu}
\and
\IEEEauthorblockN{Mengze Li$^\dagger$}
\IEEEauthorblockA{
mli562@wisc.edu}
\and 
\IEEEauthorblockN{Daniel Pimentel-Alarcón}
\IEEEauthorblockA{
pimentelalar@wisc.edu}
\and
\IEEEauthorblockN{Matthew L. Malloy}
\IEEEauthorblockA{
matthew.malloy@wisc.edu}
\thanks{$^\dagger$ Undergraduate Student.  All authors are at the University of Wisconsin, Madison, Wisconsin, United States. }
}
\begin{document}

\maketitle

\begin{abstract}
Computation of confidence sets is central to data science and machine learning, serving as the workhorse of A/B testing and underpinning the operation and analysis of reinforcement learning algorithms \cite{jamieson2014lil}.  This paper studies the geometry of the \emph{minimum-volume confidence sets} for the multinomial parameter.  When used in place of more standard confidence sets and intervals based on bounds and asymptotic approximation, learning algorithms can exhibit improved sample complexity. Prior work \cite{malloy2021ISIT} showed the minimum-volume confidence sets are the level-sets of a discontinuous function defined by an exact $p$-value.  While the confidence sets are optimal in that they have minimum average volume, computation of membership of a single point in the set is challenging for problems of modest size.  Since the confidence sets are level-sets of discontinuous functions, little is apparent about their geometry.  This paper studies the geometry of the \emph{minimum volume confidence sets} by enumerating and covering the continuous regions of the exact $p$-value function.  This addresses a fundamental question in A/B testing: given two multinomial outcomes, how can one determine if their corresponding minimum volume confidence sets are disjoint?  We answer this question in a restricted setting.   
\end{abstract}

\section{Introduction}
Confidence sets, regions, and intervals are fundamental tools in data science, statistical inference, and machine learning, capturing a range of plausible beliefs of the parameters of a model. For simplicity of computation and analysis, most approaches to construct confidence sets rely on approximation or bounds that are loose in the small sample regime \cite{casella2021statistical, chafai2009confidence, malloy2021ISIT}. While these approaches are often optimal asymptotically, tighter confidence sets in the small sample regime can reduce sample complexity in A/B testing, reinforcement learning algorithms, and other problems in applied data science \cite{malloy2020optimal, jamieson2013finding, malloy2015contamination, malloy2012quickest, malloy2013sample}.


Finding tight confidence sets for categorical distributions is a long studied problem.  The goal is to construct sets of minimal volume (i.e, as small as possible) that contain the true parameter with high confidence.
%
%
%
Recent work \cite{malloy2021ISIT} studied a confidence set construction based on level-sets of the exact $p$-value function that satisfies a minimum volume property. Averaged over the possible empirical outcomes, \cite{malloy2021ISIT} showed that the confidence sets proposed in \cite{chafai2009confidence} have minimum volume among any confidence set construction.  The result is based on a duality between hypothesis testing and confidence sets, and a is specific instance of a general theory of optimal confidence sets \cite{brown1995optimal} first observed in restricted settings in the traditional work of Sterne \cite{sterne1954some} and Crow \cite{crow1956confidence}. 

The minimum volume confidence sets (MVCs) for the multinomial parameter are defined by the level-sets of the exact $p$-value.  To compute membership of a single parameter value in the set, one must compute the exact $p$-value; naively, this involves enumerating and computing partial sums of all the empirical outcomes of $n$ i.i.d.~observations that may take one of $k$ possible values. While this direct approach to computing the $p$-value scales as $n^k$, recent work has \cite{resin2020simple} reduced this computation to $(\sqrt{n})^k$. Nonetheless, checking membership of a single parameter value in the confidence set becomes prohibitive for modest values of $k$ and $n$.  


This computational limitation becomes even more challenging for basic applications.  A common task is to observe two empirical outcomes and determine if the corresponding confidence sets are disjoint.  This arises in A/B testing; if two confidence sets are disjoint, then the underlying multinomial parameters associated with the outcomes are different (to within a significance specified by the confidence-level).  A naive approach is to grid the multinomial parameter values and check for a value that lies in the confidence sets of both outcomes.  Unfortunately, this fails to guarantee an empty intersection, as the MVCs can have irregular geometry, including arbitrarily small disconnected regions. As they are constructed from the level-sets of a discontinuous function, the MVCs do not satisfy properties such as convexity, radial-convexity, or connected-ness (see Fig. \ref{fig:confidence_set}).  This contrasts with traditional confidence sets where an empty intersection can be determined by exploiting geometric properties such as convexity. 

This paper studies the geometry of the {\em minimum volume confidence sets for the multinomial parameter}.  We describe an algorithm that enumerates and covers the regions of the simplex over which the exact $p$-value function is continuous, enabling numerical characterization of the MVCs. As a consequence, we answer a basic question in A/B testing in the restricted setting of three categories.  Given two multinomial outcomes, how can one determine if their corresponding confidence sets are disjoint?  The numerical characterization of the sets, facilitated by the enumeration and covering of the continuous regions of the exact $p$-value function provides a definitive answer to this question.  The approach sheds light on the answer for more than three categories, but this remains an open question.

\section{Notation and Basic Definitions}
Let $X_1, \dots, X_n$ be i.i.d.\ samples of a categorical random variable that takes one of $k$ possible values from a finite number of categories $\mathcal{X} = \{x_1, \dots, x_k\}$. The empirical distribution $\phat$ is the relative proportion of occurrences of each element of $\mathcal{X}$ in $X_1, \dots, X_n$, i.e., $\phat = [\nicefrac{n_1}{n}, \dots, \nicefrac{n_k}{n}]$, where ${n}_i = \sum_{j=1}^n {\1_{\{ X_j = x_i \} }  }$. Let $\Delta_{k,n}$ denote the discrete simplex from $n$ samples over $k$ categories:
\begin{align*}
\Delta_{k,n} \ := \ \left\{ \phat \in \{0, \ \nicefrac{1}{n},\ \nicefrac{2}{n}, \ \dots, \ 1\}^k \ : \  \sum_{i=1}^k \widehat{p}_i =1  \right\},
\end{align*}
and define $m = |\Delta_{k,n}|= { n+k-1 \choose k-1}$. 
Denote the continuous simplex as $\Delta_k = \left\{\bp \in [0,1]^k :  \sum_i p_i =1  \right\}$. We use $\mathcal{P}(\Delta_{k,n})$ to denote the power set of $\Delta_{k,n}$, and $\mathcal{P}(\Delta_k)$ to denote the set of Lebesgue measurable subsets of $\Delta_k$. For any $\mathcal{S} \subset \Delta_{k,n}$ we write $\mathbb{P}_{\boldsymbol{p}}(\mathcal{S})$ as shorthand for  $\mathbb{P}_{\bp} \left(\left\{ X \in \mathcal{X}^n : \phat(X) \in \mathcal{S}\right\} \right)$, where $\mathbb{P}_{\bp}( \cdot)$ denotes the probability measure under the multinomial parameter $\bp \in\Delta_{k}$.

\begin{defi}(Confidence set)  Let $\mathcal{C}_{\alpha}( \phat): \Delta_{k,n} \rightarrow \mathcal{P}(\Delta_{k})$ be a set valued function that maps an observed empirical distribution $\phat$ to a subset of the $k$-simplex. $\mathcal{C}_{\alpha}( \phat)$   is a \emph{confidence set} at confidence level $1-\alpha$ if the following holds:

\begin{eqnarray} \label{eqn:cr}
\sup_{\bp \in \Delta_{k}} \mathbb{P}_{\bp}  \left( \bp  \not \in \mathcal{C}_{\alpha}(\phat) \right) \ \leq \ \alpha.
\end{eqnarray}
\label{def:cr}
\end{defi}




\begin{defi} ($p$-value) Fix an outcome $\phat$.  The 
$p$-value as a function of the null hypothesis $\bp$ is given by:
\begin{align} \label{eqn:partial}
\rho_{\phat}(\bp) \ = \ \sum_{\qhat \in \Delta_{k,n} : \mathbb{P}_{\bp}(\qhat) \leq \mathbb{P}_{\bp}(\phat)  } \mathbb{P}_{\bp}  \left( \qhat  \right).
\end{align}
\end{defi}
\noindent
For a fixed outcome $\widehat{\bp}$, we write $\rho(\bp)$ for simplicity.

\begin{prop}
\label{def:minvol}
(Minimum volume confidence set (MVCs) \cite{malloy2021ISIT}).  
The MVCs are defined as 
\begin{eqnarray*} 
\mathcal{C}_{\alpha}^\star(\phat) \ := \ \big\{\bp \in\Delta_{k} \ : \ \rho_{\phat}(\bp) \geq \alpha \big\},
\end{eqnarray*}
and satisfy 
\begin{eqnarray*}
\sum_{\phat \in \Delta_{k,n} } \mathrm{vol} \left( \mathcal{C}_{\alpha}^\star(\phat) \right) \leq
\sum_{\phat \in \Delta_{k,n} } \mathrm{vol} \left( \mathcal{C}_{\alpha}(\phat) \right)
\end{eqnarray*}
for any confidence set $\mathcal{C}_{\alpha}( \cdot )$; here $\mathrm{vol}( \cdot ) $ denotes the Lesbague measure.  A proof can be found in \cite{malloy2021ISIT}. 
\end{prop}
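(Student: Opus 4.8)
The plan is to prove two things: that $\mathcal{C}^\star_\alpha$ is itself a valid confidence set in the sense of Definition~\ref{def:cr}, and that it minimizes the average volume among all such sets. For validity I would fix a true parameter $\bp$ and show that the $p$-value $\rho_{\phat}(\bp)$ is super-uniform under sampling from $\bp$, i.e.\ $\mathbb{P}_{\bp}(\rho_{\phat}(\bp) < \alpha) \leq \alpha$. The key observation is that, for fixed $\bp$, the quantity $\rho_{\phat}(\bp)$ depends on $\phat$ only through $\mathbb{P}_{\bp}(\phat)$ and is monotone in the probability ordering of outcomes: by \eqref{eqn:partial} it is exactly the cumulative mass of all outcomes at least as unlikely as $\phat$. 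Hence the event $\{\phat : \rho_{\phat}(\bp) < \alpha\}$ is downward closed in this ordering, and its total probability equals the largest $\rho$-value it contains, which is strictly below $\alpha$. This yields the coverage bound and shows $\bp \notin \mathcal{C}^\star_\alpha(\phat)$ with probability at most $\alpha$.

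For the minimum-volume claim I would exploit the duality between confidence sets and families of hypothesis tests. Writing each volume as an integral of an indicator and interchanging the finite sum over $\phat$ with the integral over $\Delta_k$, I obtain
\[
\sum_{\phat \in \Delta_{k,n}} \mathrm{vol}(\mathcal{C}_\alpha(\phat)) \ = \ \int_{\Delta_k} \big| \{ \phat \in \Delta_{k,n} : \bp \in \mathcal{C}_\alpha(\phat) \} \big| \, d\bp.
\]
Define the acceptance region $A(\bp) := \{\phat : \bp \in \mathcal{C}_\alpha(\phat)\}$; the confidence condition \eqref{eqn:cr} is equivalent to $\mathbb{P}_{\bp}(A(\bp)) \geq 1-\alpha$ for every $\bp$, and conversely any family $\{A(\bp)\}$ meeting this constraint defines a valid confidence set via $\mathcal{C}_\alpha(\phat) = \{\bp : \phat \in A(\bp)\}$. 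The integrand is then $|A(\bp)|$, and since the constraint decouples across $\bp$, it suffices to minimize the cardinality $|A(\bp)|$ pointwise subject to $\mathbb{P}_{\bp}(A(\bp)) \geq 1-\alpha$.

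The pointwise problem is a greedy/exchange argument: minimizing $|A(\bp)|$ under a probability floor is equivalent to maximizing the cardinality of the rejection region $\Delta_{k,n} \setminus A(\bp)$ under the mass ceiling $\alpha$, which is achieved by collecting the least-probable outcomes first. An exchange argument shows the optimal rejection region must be a lower set in the $\mathbb{P}_{\bp}$-ordering, and this is precisely $\{\phat : \rho_{\phat}(\bp) < \alpha\}$, the complement of the MVC acceptance region. Matching the pointwise optimizer to $\mathcal{C}^\star_\alpha$ then yields the claimed inequality.

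I expect the main obstacle to be the bookkeeping around ties in the probability ordering and the boundary case $\rho_{\phat}(\bp) = \alpha$: I must verify that the $\geq \alpha$ convention in the definition of $\mathcal{C}^\star_\alpha$ makes its acceptance region coincide \emph{exactly} with the greedy optimum, rather than merely up to a tie-breaking set, and that pointwise minimization of the integrand is legitimate (measurability of $\bp \mapsto |A(\bp)|$ and the genuine decoupling of the per-parameter constraints). The conceptual crux is the duality step, which converts a volume minimization over set-valued maps into independent per-parameter cardinality problems.
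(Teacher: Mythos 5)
Your overall strategy---the test/confidence-set duality, rewriting the summed volume as $\int_{\Delta_k} |A(\bp)|\, d\bp$ with $A(\bp)=\{\phat : \bp \in \mathcal{C}_\alpha(\phat)\}$, and minimizing the acceptance-region cardinality parameter-by-parameter---is exactly the route the paper relies on: it gives no proof of its own, deferring to \cite{malloy2021ISIT}, which it describes as resting on ``a duality between hypothesis testing and confidence sets.'' Your coverage argument for $\mathcal{C}^\star_\alpha$ (the rejection event is downward closed in the $\mathbb{P}_{\bp}$-ordering, so its mass equals the largest $p$-value it contains, which is below $\alpha$) is sound. The genuine gap sits precisely at the point you flagged and deferred: the verification you propose---that $A^\star(\bp) = \{\phat : \rho_{\phat}(\bp) \geq \alpha\}$ coincides \emph{exactly} with the greedy optimum for every $\bp$---is false, so that step cannot be carried out as stated. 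Concretely, consider a $\bp$ at which the two least-probable outcomes are tied, say $\mathbb{P}_{\bp}(\qhat) = \mathbb{P}_{\bp}(\qhat') = 0.1$, and take $\alpha = 0.15$. The greedy rejection region discards one of the two tied outcomes (cardinality $1$, mass $0.1 \leq \alpha$), but both tied outcomes have $\rho = 0.2 > \alpha$, so the MVC rejects neither. A similar failure occurs without ties whenever a cumulative sum of least-probable outcomes equals $\alpha$ exactly, because rejection in $\mathcal{C}^\star_\alpha$ requires the strict inequality $\rho_{\phat}(\bp) < \alpha$. Hence $\mathcal{C}^\star_\alpha$ is \emph{not} pointwise optimal, and any proof asserting exact pointwise coincidence is unrepairable as written.

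The correct repair is that pointwise optimality is needed only almost everywhere. The set of $\bp$ where the discrepancy can occur is contained in a finite union of tie sets $\{\bp \in \Delta_k : \mathbb{P}_{\bp}(\qhat) = \mathbb{P}_{\bp}(\qhat')\}$ for $\qhat \neq \qhat'$ (these are exactly the discontinuity varieties of Def.~\ref{def:dis_var}) together with boundary sets $\{\bp \in \Delta_k : \sum_{\qhat \in S} \mathbb{P}_{\bp}(\qhat) = \alpha\}$ over nonempty proper subsets $S$ of $\Delta_{k,n}$. Each is the zero set of a polynomial in $\bp$ that is not identically zero (using $0<\alpha<1$), hence has Lebesgue measure zero in $\Delta_k$. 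Off this null set, your greedy/exchange argument does establish $|A^\star(\bp)| = \min\{|A| : \mathbb{P}_{\bp}(A) \geq 1-\alpha\}$, and since the volumes are integrals, the null set contributes nothing: $\sum_{\phat} \mathrm{vol}(\mathcal{C}^\star_\alpha(\phat)) = \int_{\Delta_k} |A^\star(\bp)|\, d\bp \leq \int_{\Delta_k} |A(\bp)|\, d\bp = \sum_{\phat} \mathrm{vol}(\mathcal{C}_\alpha(\phat))$. Note also that you do not need the converse half of your duality claim (that an arbitrary family $\{A(\bp)\}$ yields a measurable confidence set); the inequality only requires that every valid $\mathcal{C}_\alpha$ induces acceptance regions satisfying the per-$\bp$ constraint, plus the a.e.\ minimality of $A^\star$, so the measurability worry you raise dissolves.
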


\section{Geometry of the Minimum Volume \\ Confidence Sets} \label{sec:Geometry}

By definition, the MVCs are the level-sets of the $p$-value function, which is a discontinuous function of $\bp$, because it is a partial sum of the multinomial outcomes. In particular, an arbitrarily small change in $\bp$ can include or exclude new terms of the sum in \eqref{eqn:partial}.  For a region of the simplex over which the terms included in the partial sum do not change, the $p$-value \emph{is} continuous, as it is a sum of continuous functions.  If terms included in the partial sum change, a discontinuity may occur.

\begin{figure}
\centering
\includegraphics[width=0.5\textwidth]{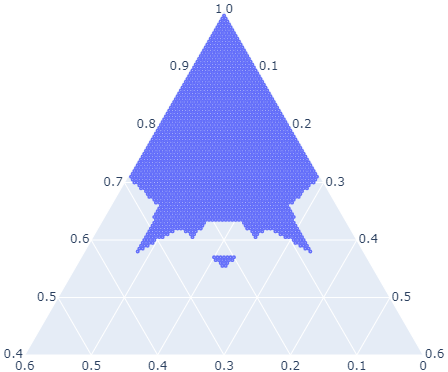}
\caption{An example of disconnected MVCs (blue) with observation $\phat = [0,1,0]$ and confidence level 0.5 for $k=3,n=4$. Note the figure represents a corner of the simplex, as specified by the range on the axis. 
\label{fig:confidence_set}}
\end{figure}

The terms included in the partial sum in (\ref{eqn:partial}) correspond to $\{\qhat \in \Delta_{k,n} : \mathbb{P}_{\bp}(\qhat) \leq \mathbb{P}_{\bp}(\phat)\}$. Consequently, the discontinuities of $\rho_{\phat}(\bp)$ occur whenever $\mathbb{P}_{\bp}(\phat) = \mathbb{P}_{\bp}(\qhat)$ for some $\qhat \in \Delta_{k,n}\backslash \phat$. Observe that $\mathbb{P}_{\bp}(\phat)$ is fully characterized by the multinomial distribution with parameter $\bp$ as:
\begin{align*}
\mathbb{P}_{\bp}(\phat) 
\ = \ \frac{n!}{(n\widehat{p}_1)!  \ldots (n\widehat{p}_k)!} p_1^{n\widehat{p}_1}  \cdots p_k^{n\widehat{p}_k}.
\end{align*}
It follows that the condition $\mathbb{P}_{\bp}(\phat) = \mathbb{P}_{\bp}(\qhat)$ can be rewritten as the following equation:
\begin{align*}
c_0 p_1^{c_1} p_2^{c_2}  \cdots p_k^{c_k} = 1,
\end{align*}
where $c_0:=\frac{(n\widehat{q}_1)!  \cdots (n\widehat{q}_k)!}
{(n\widehat{p}_1)!  \cdots (n\widehat{p}_k)!}$, and $c_i = n(\widehat{p}_i - \widehat{q}_i)$. 
This implies that the discontinuities of $\rho(\bp)$ are characterized by a union of algebraic varieties in the simplex $\Delta_k$.

\begin{defi}(Discontinuity variety) \label{def:dis_var}
Consider a fixed observation $\phat \in \Delta_{k,n}$, and let $\ell=1,\dots,m-1$ enumerate the types $\qhat \in \Delta_{k,n} \backslash \phat$.  We define the {\bf discontinuity variety} $\Var_{\ell}$ as the intersection of the simplex $\Delta_k$ with the $(k-1)$-dimensional algebraic variety characterized by
\begin{align}
\label{polyDef}
f_\ell(\bp) &= 1 - c_0 p_1^{c_1} p_2^{c_2}  \cdots p_k^{c_k}.
\end{align}
Notice that $f_{\ell}(\bp)$ is defined over $\R^k$, whereas $\Var_{\ell}$ is a $(k-2)$-dimensional subset of $\Delta_k$, and both have implicit dependence on $\widehat{\bp}$ and $\widehat{\bq}$ through $c_0, \dots, c_{k}$.
\end{defi}

It follows that the union of the discontinuity varieties
\begin{align*}
\bigcup_{\ell =1}^{m-1} \Var_{\ell}
\end{align*}
characterizes all the discontinuities in the $p$-value function, which in turn partitions the simplex $\Delta_k$ into at most $2^{m-1}$ (possibly disconnected) sets. To see this observe that each discontinuity variety $\Var_{\ell}$ splits $\Delta_k$ in two open sets:
\begin{align*}
\{\bp \in \Delta_k : f_\ell(\bp) < 0\},
\hspace{.25cm} \text{and} \hspace{.25cm}
\{\bp \in \Delta_k : f_\ell(\bp) > 0\}.
\end{align*}
Since $\ell \in \{1, \dots, m-1\}$ (there are $m-1$ elements $\qhat$ in $\Delta_{k,n} \backslash \phat$), the discontinuity varieties will split $\Delta_k$ in at most $2^{m-1}$ {\em candidate sets}, each defined by a combination of directions in the {\em splitting inequalities}
\begin{align}
\label{eqn:cont_region}
\big\{ f_\ell(\bp) \ \lessgtr \ 0 \big\}_{\ell=1}^{m-1}.
\end{align}
By construction, no point in these candidate sets satisfies a discontinuity condition $f_{\ell}(\bp)=0$, which implies that $\rho_{\phat}(\bp)$ is continuous in these regions.  However, many of these candidate sets may be empty (if they result from inconsistent splitting inequalities).  Moreover, each non-empty candidate set consists of a finite number of connected regions (a notion we make precise in the following section).  We  refer to each as a {\em continuity region} and formalize these ideas in the  following.

\begin{defi}
(Candidate set; continuity set; continuity region) Given $\o \in \{-1,1\}^{m-1}$, let
\begin{align*}
\Reg_\o \ = \ \left\{ \bp \in \Delta_k : \bigwedge_{\ell=1}^{m-1} f_{\ell}(\bp) \ineqnew 0 \right\}
\end{align*}
be the {\bf candidate set} associated with the combination of splitting inequalities indexed by $\o$ (here $\omega_{\ell}$ denotes the $\ell^{\rm th}$ entry of $\o$).
We say $\Reg_\o$ is a {\bf continuity set} if $\Reg_\o \neq \emptyset$. Furthermore, each continuity set is the union of a finite number of connected subsets, termed {\bf continuity regions}. 
We say $f_\ell(\bp)$ {\bf touches} $\Reg_{\o}$ if the closure of $\Reg_{\o}$ includes a $\bp$ such that $f_\ell(\bp)=0$.
\end{defi}

An example of the continuity regions associated with an observation $\phat$ is shown in Fig. \ref{fig:continuity_regions}.  The $p$-value function $\rho(\bp)$ is continuous over each region. 

\begin{figure}
\centering
\includegraphics[width=0.46\textwidth]{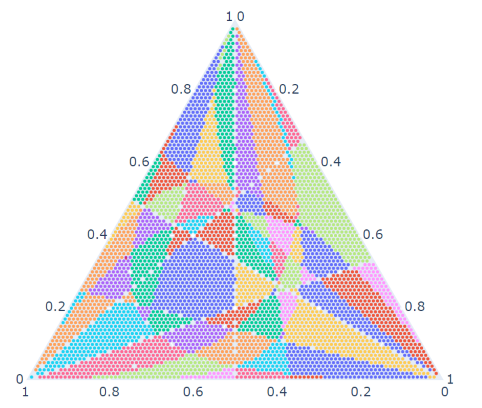}
\caption{Partitioning of the simplex $\Delta_k$ into continuity regions for $n = 4$, $k = 3$, and $\phat = [\nicefrac{1}{4},\nicefrac{1}{2},\nicefrac{1}{4}]$. The $p$-value function is continuous over each continuity set (indicated by the colors). Figure generated by Plotly \cite{plotly}.
\label{fig:continuity_regions}}
\end{figure}

\subsection{Identifying Continuity Sets} \label{sec:finding}
The key to identifying \emph{continuity} sets in the simplex lies in determining whether the each of the $2^{m-1}$ \emph{candidate} sets are empty. To determine if a candidate set $\Reg_\o$ is empty, we first check if its splitting inequalities are feasible.
If they are infeasible, the candidate set is empty. 
On the other hand, if the splitting inequalities result in a non-empty set, we can further check if this set intersects the simplex by finding the minimum and maximum of $p_1+\dots+p_k$, constrained by the splitting inequalities.   As we show, if this minimum value is less one, and the maximum is greater than one, we can conclude that $\Reg_\o$ is a (non-empty) \emph{continuity set}.  We make this approach precise in the following discussion and Alg. \ref{alg:fcr}.

\begin{algorithm} \label{alg:fcr}
\caption{Find \emph{continuity sets}}\label{alg:fcr}
\begin{algorithmic}[1]
\State{\textbf{Input}}: observation $\phat \in \Delta_{k, n}$
\State {initialize: set $\O = \{ \}$ }
\For {$\o \in \{-1, 1\}^{m-1}$}
\State {$\mathcal{S} = \left\{ \bp \in \mathbb{R}^k : \bigwedge_{\ell=1}^{m-1} f_{\ell}(\bp) \ineqnew  0 \right\}$ }
\State{Solve GP: $t_{\mathrm{min}} = \min_{\bp \in \mathcal{S}} \sum_{i=1}^{k}p_i$} 
\If {GP feasible and $ t_{\mathrm{min}} \leq 1$}
\State {$\mathcal{P} =\left\{ \bz \in \mathbb{R}^{k+} : \bigwedge_{\ell=1}^{m-1} \bz^T c_{\ell} \ineqnew  c_\ell \right\}$}
\State {$\{\bv_1, \dots, \bv_j\} \leftarrow$ vertices of $\mathcal{P}$}
\State { $t_{\mathrm{max}} = \max_{\bz \in \{\bv_1, \dots, \bv_j\} }  \sum_{i=1}^{k}e^{-z_i}$}

\If {$t_{\mathrm{max}} \geq 1$} 
    \State {Append $\o$ to $\O$}
\EndIf
\EndIf
\EndFor
\State {{\bf Return}: continuity sets $\O$}
\end{algorithmic}
\end{algorithm}



\label{subsec:GP}
To check feasibility and find the minimum of $p_1+\cdots+p_k$ we employ a Geometric Program (GP). Denote $t_{\mathrm{min}}$ the solution to the following optimization: 
\begin{align*}
t_{\mathrm{min}} = 
    \underset{\bp \in \mathbb{R}^{k+}}{\text{min }} & p_1+\cdots+p_k \\
    \qquad \text{subject to } & 
    \bigwedge_{\ell=1}^{m-1} f_{\ell}(\bp) \ineqnew  0.
\end{align*}
\noindent Note the objective is a posynomial function and the term $c_0 p_1^{c_1} p_2^{c_2}  \cdots p_k^{c_k}$ in the discontinuity variety is a monomial function (which is also a posynomial function). This makes the optimization problem a standard GP \cite{boyd2007tutorial} and we can easily deterimine $t_{\mathrm{min}}$ using off-the-shelf solvers.  

Next, we aim to find the maximum of $p_1+\dots+p_k$, which is most easily accomplished by a logarithmic transformation, which we refer to as \emph{$z$-space}.  While the continuity sets are defined as the intersection of the simplex with a region defined by non-linear splitting equalities, after a logarithmic transformation, the splitting equalities represent hyperplanes.  More specifically, the constraint $f_{\ell}(\bp) = 0$ for $\bp \in \mathbb{R}^{k+}$ is equivalent to 
$ \log(c_{0} p_1^{c_{1}}  \cdots p_k^{c_{k}}) = 0 $
which can be represented as an inner product $\bz^T\bc = c$, a notion we make precise in the following observation.

\begin{obs} \label{obs:zspace} $\bz$-space.
Let $z_i = -\log(p_i)$, $c = \log(c_0)$ and $\bc = [c_1, \dots, c_k]^T$.  Then, 
\begin{eqnarray*}
\left\{ \bp \in \mathbb{R}^{k+}: f_{\ell}(\bp) = 0 \right\}
= \left\{  e^{-\bz} : \bz^T \boldsymbol{c} = c \right\}
\end{eqnarray*}
for $\bz \in \mathbb{R}^{k+}$. 
\end{obs}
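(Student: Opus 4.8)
The plan is to establish the set equality by a direct change of variables, showing that the nonlinear defining condition $f_\ell(\bp)=0$ is \emph{equivalent} to the affine condition $\bz^T\bc=c$ under the componentwise substitution $z_i=-\log p_i$. Since $f_\ell(\bp)=1-c_0 p_1^{c_1}\cdots p_k^{c_k}$, the condition $f_\ell(\bp)=0$ is the same as $c_0\prod_{i=1}^k p_i^{c_i}=1$. The key observation is that on the positive orthant every factor is strictly positive: each $p_i>0$ by assumption, and $c_0=\tfrac{(n\widehat{q}_1)!\cdots(n\widehat{q}_k)!}{(n\widehat{p}_1)!\cdots(n\widehat{p}_k)!}>0$ as a ratio of factorials. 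Hence both sides of $c_0\prod_i p_i^{c_i}=1$ are positive, and we may apply the logarithm, which is a strictly increasing bijection $\R^{+}\to\R$; applying it therefore \emph{preserves equivalence} rather than merely implying one direction.

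Taking logarithms gives $\log c_0+\sum_{i=1}^k c_i\log p_i=0$. Substituting $\log p_i=-z_i$ turns this into $-\sum_{i=1}^k c_i z_i+\log c_0=0$, i.e.\ $\sum_{i=1}^k c_i z_i=\log c_0$, which is exactly $\bz^T\bc=c$ with $c=\log c_0$. This is precisely the affine hyperplane advertised in the observation.

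To upgrade this equivalence of conditions to the claimed equality of sets, I would invoke bijectivity of the change of variables: the map $\bp\mapsto\bz$ with $z_i=-\log p_i$ has inverse $\bz\mapsto e^{-\bz}$ (componentwise), so $\bp\mapsto\bz$ is a bijection between the relevant domains. Consequently it carries the solution set of $f_\ell(\bp)=0$ bijectively onto the solution set of $\bz^T\bc=c$, and the two displayed sets coincide.

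The one point requiring care --- and the only place a careless statement could be slightly off --- is matching the domains of $\bp$ and $\bz$ so the result is an exact equality and not merely an inclusion. Restricting to $\bp\in\R^{k+}$ keeps every $p_i$ strictly positive, which is what makes $\log p_i$ (and hence $z_i$) well defined and avoids the divergence at the simplex boundary $p_i=0$. Within the simplex one additionally has $p_i\le 1$, so $z_i=-\log p_i\ge 0$, which is exactly why the image lies in $\R^{k+}$ as stated; I would note this correspondence explicitly to justify writing $\bz\in\R^{k+}$ on the right-hand side. Beyond this bookkeeping there is no substantive obstacle: the content of the observation is simply that the logarithm linearizes the monomial discontinuity condition.
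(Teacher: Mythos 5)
Your proposal is correct and matches the paper's own justification, which is exactly this logarithmic change of variables: the paper notes (just before the observation) that $f_\ell(\bp)=0$ for $\bp\in\mathbb{R}^{k+}$ is equivalent to $\log(c_0 p_1^{c_1}\cdots p_k^{c_k})=0$, which under $z_i=-\log p_i$ becomes the hyperplane $\bz^T\bc=c$. Your added care about bijectivity of $\bp\mapsto\bz$ and about when $\bz$ actually lands in $\mathbb{R}^{k+}$ (namely $p_i\le 1$, as on the simplex) is a sound refinement of a point the paper glosses over, but it is the same argument.
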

\noindent Obs. \ref{obs:zspace} implies the set of splitting inequalities in  
(\ref{eqn:cont_region}) is equivalent linear halfspace inequalities: 
\begin{eqnarray}
\{ \bz^T \bc_\ell \lessgtr c_\ell \}_{\ell=1}^{m-1}
\end{eqnarray}
where $\ell=1,\dots,m-1$ indexes $\widehat{\bq} \in \Delta_{n,k} \setminus \widehat{\bp}$.  The set of splitting equalities is the intersection of linear halfspaces, which forms a polyhedron $\mathcal{P}$.  To compute the maximum of $p_1+\cdots+p_k$ in the original space, we can equivalently maximize $e^{-z_1}+\cdots+e^{-z_k}$ which is convex in $\bz$.  

\begin{obs} \label{obs:conv}
The maximum (if it exists) of a convex function $g(\bz)$ over the polyhedron $\mathcal{P}$ is achieved at one of the vertices $\bv_1,\cdots,\bv_j$ of $\mathcal{P}$, since for $\sum \lambda_i =1$, $\lambda_i \geq 0$,
\begin{eqnarray*}
g\left(\sum_{i} \lambda_i \bv_i\right) \leq \sum_{i} g\left( \lambda_i \bv_i\right) \leq \mathrm{max}_i\left(g(\bv_i) \right)).
\end{eqnarray*}
\end{obs}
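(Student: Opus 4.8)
The plan is to prove the stronger pointwise bound $g(\bz) \le \max_{i} g(\bv_i)$ for \emph{every} $\bz \in \mathcal{P}$; since each vertex $\bv_i$ belongs to $\mathcal{P}$, this yields $\max_{\bz \in \mathcal{P}} g(\bz) = \max_i g(\bv_i)$ and the maximum is attained at a vertex. The only ingredients are the convexity of $g$ and a representation of an arbitrary point of $\mathcal{P}$ through its vertices $\bv_1,\dots,\bv_j$. First I would invoke the Minkowski--Weyl resolution theorem, which writes the polyhedron as $\mathcal{P} = \mathrm{conv}\{\bv_1,\dots,\bv_j\} + \mathcal{K}$, where $\mathcal{K}$ is the (convex) recession cone. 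When $\mathcal{P}$ is bounded, $\mathcal{K}=\{\boldsymbol{0}\}$ and every $\bz \in \mathcal{P}$ is a convex combination $\bz = \sum_i \lambda_i \bv_i$ with $\lambda_i \ge 0$ and $\sum_i \lambda_i = 1$. Jensen's inequality then delivers the displayed chain $g(\bz) = g\!\left(\sum_i \lambda_i \bv_i\right) \le \sum_i \lambda_i\, g(\bv_i) \le \max_i g(\bv_i)$, the last step using $\lambda_i \ge 0$ and $\sum_i \lambda_i = 1$, which settles the bounded case.

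The main obstacle is the unbounded case, which is precisely where the hypothesis ``if it exists'' does its work, since a convex function can grow without bound along a recession ray. Here I would write any $\bz \in \mathcal{P}$ as $\bz = \by + \boldsymbol{d}$ with $\by = \sum_i \lambda_i \bv_i$ a convex combination of vertices and $\boldsymbol{d} \in \mathcal{K}$; convexity of $\mathcal{K}$ guarantees that the total displacement is itself a single recession direction. The existence of a finite maximum forces the convex one-variable map $t \mapsto g(\by + t\boldsymbol{d})$ to be bounded above on $[0,\infty)$, and a convex function bounded above on a half-line is non-increasing, so $g(\bz) = g(\by + \boldsymbol{d}) \le g(\by)$. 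This reduces the claim to the polytope part $\by$, where the bounded-case Jensen argument applies verbatim and gives $g(\by) \le \max_i g(\bv_i)$.

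In the specific $\bz$-space of Observation~\ref{obs:zspace} this subtlety can in fact be bypassed: since $z_i = -\log p_i \ge 0$ on the relevant domain and $g(\bz) = \sum_i e^{-z_i}$ is coordinatewise decreasing with $0 \le g(\bz) \le k$, the supremum is automatically finite, and every recession direction of $\mathcal{P}$ sends some $z_i \to +\infty$ and can only decrease $g$. Either way, $g(\bz) \le \max_i g(\bv_i)$ holds on all of $\mathcal{P}$, proving the observation. I expect the recession-cone step to be the only nontrivial part; the bounded case is an immediate application of Jensen's inequality, and the domain-specific boundedness of $g$ makes the whole argument elementary in the setting where it is actually used.
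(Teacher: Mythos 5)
Your proof is correct, and its core step---representing a point of $\mathcal{P}$ via the vertices and applying Jensen's inequality---is exactly the paper's own (inline) justification; but you add two things the paper lacks. First, you state Jensen correctly: the paper's middle term $\sum_{i} g(\lambda_i \bv_i)$ is evidently a typo for $\sum_{i} \lambda_i\, g(\bv_i)$, since as written the chain fails for general convex $g$ (take $g$ a negative constant), whereas with the weights in place both inequalities hold. Second, and more substantively, the paper's argument implicitly assumes every point of $\mathcal{P}$ is a convex combination of its vertices, which is true only when $\mathcal{P}$ is bounded; your Minkowski--Weyl decomposition $\mathcal{P} = \mathrm{conv}\{\bv_1,\dots,\bv_j\} + \mathcal{K}$ together with the recession-cone step (a convex function bounded above on a ray is non-increasing along it, so the ``if it exists'' hypothesis kills the unbounded directions) closes this gap. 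This is not pedantry in context: since $z_i = -\log p_i \to \infty$ as $p_i \to 0$, the polyhedron $\mathcal{P}$ cut out by the splitting halfspaces inside $\mathbb{R}^{k+}$ is typically unbounded---precisely the situation the paper acknowledges by setting $t_{\max} = \infty$ when no maximum exists---so your version is the one that actually covers the algorithm's use case. Your closing remark that the specific $g(\bz) = \sum_i e^{-z_i}$ is bounded and coordinatewise decreasing (and that recession directions are non-negative because $\mathcal{P} \subset \mathbb{R}^{k+}$) gives an even shorter route in the application itself. The one caveat you share with the paper is the tacit assumption that $\mathcal{P}$ is pointed so that vertices exist at all; here that is guaranteed by the positive-orthant constraint.
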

\noindent Obs. \ref{obs:conv} allows us to directly compute the maximum by enumerating the vertices of the $\mathcal{P}$.  In particular, define 
\begin{eqnarray}
t_{\max} = \max_{\bz \in \{\bv_1, \dots, \bv_j \} } \sum_{i=1}^{k} e^{-z_i}
\end{eqnarray}
where $\{\bv_1, \dots, \bv_j \}$ are the vertices of $\mathcal{P}$, and set $t_{\max} = \infty$ if the maximum does not exist. This gives rise to the following corollary, which provides conditions under which a candidate set is non-empty. 

\begin{cor}
If $t_\mathrm{max} \geq 1$ and $t_{\min} \leq 1$, then $\mathcal{R}_{\o} \neq \emptyset$.
\begin{proof}
If the splitting inequalities are feasible, we are guaranteed to find the minimum (using the GP) and the maximum (by checking the vertices in $\bz$-space) of $p_1+\dots+p_k$. Since $t_\mathrm{max} \geq 1$ and $t_{\min} \leq 1$, this implies  $p_1 + \dots + p_k = 1$ for some $\bp$ constrained by the splitting inequalities, which follows as the splitting inequalities define a connected subset of $\mathbb{R}^{k+}$.  Together, this implies $\Reg_{\o} \neq \emptyset$.
\end{proof}
\end{cor}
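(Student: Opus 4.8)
The plan is to reduce the claim to an intermediate-value argument on the set cut out by the splitting inequalities alone. Write $\mathcal{S} = \{\bp \in \mathbb{R}^{k+} : \bigwedge_{\ell=1}^{m-1} f_\ell(\bp) \ineqnew 0\}$ as in Alg.~\ref{alg:fcr}, and set $\sigma(\bp) = p_1 + \cdots + p_k$. Since $\Reg_\o = \mathcal{S} \cap \Delta_k = \{\bp \in \mathcal{S} : \sigma(\bp) = 1\}$, it suffices to produce a single $\bp^\star \in \mathcal{S}$ with $\sigma(\bp^\star) = 1$; such a point automatically lies on the simplex and hence in $\Reg_\o$.

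First I would show that $\mathcal{S}$ is connected, which is the crux of the argument. By Obs.~\ref{obs:zspace}, the substitution $\bp = e^{-\bz}$ with $z_i = -\log p_i$ is a homeomorphism from $\mathbb{R}^{k+}$ onto $\mathbb{R}^k$ that sends each splitting inequality $f_\ell(\bp) \ineqnew 0$ to the linear halfspace inequality $\bz^T \bc_\ell \ineqnew c_\ell$. Thus $\mathcal{S}$ is the homeomorphic image of the polyhedron $\mathcal{P}$. A polyhedron is convex, hence connected, and the continuous image of a connected set is connected; therefore $\mathcal{S}$ is connected. This is precisely the role of the $\bz$-space linearization: connectedness of $\mathcal{S}$ is not transparent from the nonlinear form of the $f_\ell$, but it is immediate from convexity of $\mathcal{P}$.

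Next I would combine continuity of $\sigma$ with the two hypotheses. As $\sigma$ is continuous and $\mathcal{S}$ is connected, the image $\sigma(\mathcal{S}) \subseteq \R$ is an interval. By construction $\inf_{\mathcal{S}} \sigma = t_{\min} \leq 1$, while $\sup_{\mathcal{S}} \sigma = t_{\max} \geq 1$ (the supremum being attained at a vertex of $\mathcal{P}$ by Obs.~\ref{obs:conv}, or $\sigma$ being unbounded above when $t_{\max} = \infty$). Hence $\sigma(\mathcal{S})$ contains a value $\leq 1$ and a value $\geq 1$, so, being an interval, it contains $1$. This yields the desired $\bp^\star \in \mathcal{S}$ with $\sigma(\bp^\star) = 1$, and therefore $\Reg_\o \neq \emptyset$.

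The main obstacle is exactly the connectedness step, since without it $\sigma$ could straddle the level $1$ on two separate components of $\mathcal{S}$ while never equaling $1$; the $\bz$-space convexity of $\mathcal{P}$ is what rules this out. A secondary subtlety concerns the boundary cases $t_{\min} = 1$ or $t_{\max} = 1$, where one must verify that the extremizing point genuinely satisfies the \emph{strict} splitting inequalities (i.e.\ lies in $\mathcal{S}$ rather than only in its closure), so that the produced $\bp^\star$ is a valid witness. When $t_{\min} < 1 < t_{\max}$ this is automatic, because $1$ then falls in the interior of the interval $\sigma(\mathcal{S})$; I would dispatch the equality cases separately by appealing to attainment of the GP minimizer and the vertex maximizer as feasible points of $\mathcal{S}$.
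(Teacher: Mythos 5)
Your proposal follows essentially the same route as the paper's own proof: connectedness of the set cut out by the splitting inequalities, plus an intermediate-value argument on $p_1+\cdots+p_k$ to produce a point with $\sigma(\bp)=1$ lying in $\Reg_\o$. You additionally supply the two details the paper leaves implicit---the justification of connectedness via the $\bz$-space homeomorphism onto the convex polyhedron $\mathcal{P}$, and the caveat that in the boundary cases $t_{\min}=1$ or $t_{\max}=1$ the extremizer must be checked to satisfy the strict splitting inequalities rather than merely lying in the closure of $\mathcal{S}$---so your write-up is, if anything, more complete than the paper's.
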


\subsection{Identifying Continuity Set Vertices}
Understanding and enumerating vertices of the continuity sets plays an important role in understanding their geometry of the MVCs. 
\begin{defi} \label{def:vertex}
A \emph{vertex} is a point $\boldsymbol{v} \in \Delta_{k}$ where $k-1$ splitting equalities intersect:
\begin{eqnarray*}
\boldsymbol{v} \in \Delta_k :  f_{\ell_i}(\boldsymbol{v}) = 0,\ \ \ i=1,\dots,k-1.
\end{eqnarray*}
\end{defi}
\noindent We highlight that the vertices defined here are different from the vertices \emph{of a polytope} $\mathcal{P}$ discussed in the previous section.  In particular, Def. \ref{def:vertex} refers to vertices of the continuity sets (which must lie in the simplex) as opposed to the vertices of a polytope $\mathcal{P}$ in $\bz$-space.



\begin{cor} \label{cor:vertex1}
The number of vertices is at most  $2 {m-1 \choose k-1}$.  
\begin{proof}
There are $m-1$ splitting equalities; each choice of $k-1$ results in at most 2 vertices.  To see this note the following two observations.  First, in $\bz$-space, the intersection of $k-1$ splitting equalities is is a 1-d affine space in $\mathbb{R}^k$. 
This follows as the intersection of $k-1$ splitting equalities is the solution set to a system of linear equations $A \bz = \bc$, where $A \in \mathbb{R}^{(k-1)\times k}$ with rows $\bc_{\ell_1}^T,\dots, \bc_{\ell_{k-1}}^T$.  This follows under the assumption that the $\bc_{\ell_1},\dots, \bc_{\ell_{k-1}}$ are in general position and form a linearly independent set.  

Next note that the simplex in $\bz$-space is 
\begin{eqnarray} \label{eqn:z_simplex}
\tilde \Delta_k = \left\{\bz \in \mathbb{R}^{k+} : \sum_{i=1}^k e^{-z_i} =1  \right\}
\end{eqnarray}
and is the exterior of a strictly convex set; specifically, 
$\{ \bz \in \mathbb{R}^{+}: \sum_{i=1}^k e^{-z_i} \leq 1 \}$
is a strictly convex set since $\sum_{i=1}^k e^{-z_i}$ is a strictly convex function for $z_i > 0$.  The level-sets of a strictly convex function are strictly convex sets \cite{boyd2004convex}, implying $\{ \bz \in \mathbb{R}^{+}: \sum_{i=1}^k e^{-z_i} \leq 1 \}$ is a strictly convex set.  

Lastly, a 1-d affine space can intersect the exterior of a strictly-convex set at most twice. \end{proof}
\end{cor}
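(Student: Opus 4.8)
The plan is to split the count into two independent pieces: a combinatorial factor counting how many systems of $k-1$ splitting equalities can be formed, and a geometric factor bounding by $2$ the number of simplex points each such system can contribute. By Def. \ref{def:vertex}, a vertex is determined by selecting $k-1$ of the $m-1$ splitting equalities and locating their common zero inside $\Delta_k$. There are exactly $\binom{m-1}{k-1}$ such selections, so it suffices to show that any fixed selection yields at most two vertices; the bound $2\binom{m-1}{k-1}$ then follows by multiplication.

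For a fixed selection indexed by $\ell_1,\dots,\ell_{k-1}$, I would pass to $\bz$-space via Obs. \ref{obs:zspace}, under which each equality $f_{\ell_i}(\bp)=0$ becomes the linear equation $\bz^T\bc_{\ell_i}=c_{\ell_i}$. Assembling these into a system $A\bz=\bc$ with $A\in\R^{(k-1)\times k}$ whose rows are $\bc_{\ell_1}^T,\dots,\bc_{\ell_{k-1}}^T$, and invoking the general-position hypothesis that these normals are linearly independent, the matrix $A$ has full row rank $k-1$. Hence its solution set is a $1$-dimensional affine subspace of $\R^k$, i.e.\ a single line.

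Next I would intersect this line with the image of the simplex in $\bz$-space. Writing $z_i=-\log p_i$, the constraint $\sum_i p_i=1$ becomes $\sum_i e^{-z_i}=1$, the boundary of the sublevel set $\{\bz\in\R^{k+}:\sum_i e^{-z_i}\le 1\}$, which is strictly convex because $\sum_i e^{-z_i}$ is a strictly convex function. The cleanest way to finish is to restrict $g(\bz)=\sum_i e^{-z_i}$ to the line: along the line $g$ is a sum of exponentials of an affine parameter, hence a strictly convex function of a single real variable, and a strictly convex function attains any fixed value at most twice. Therefore $g=1$ has at most two solutions on the line, giving at most two vertices per selection and the claimed bound.

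The main obstacle is the general-position hypothesis itself, since it is precisely what forces the intersection of $k-1$ equalities to be exactly one-dimensional. If the normals $\bc_{\ell_1},\dots,\bc_{\ell_{k-1}}$ are linearly dependent, the system $A\bz=\bc$ is either inconsistent, contributing no vertices, or has a solution set of dimension at least two, whose intersection with the strictly convex boundary need not be a finite point set at all. Consequently the clean count $2\binom{m-1}{k-1}$ is genuinely a statement about the generic configuration, and the care required lies in making explicit that degenerate systems are either vacuous or fall outside the transverse notion of vertex in Def. \ref{def:vertex}, so that they cannot push the count above the stated bound.
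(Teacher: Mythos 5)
Your proposal follows essentially the same route as the paper: choose $k-1$ of the $m-1$ splitting equalities, pass to $\bz$-space via Obs.~\ref{obs:zspace} to obtain a line $A\bz=\bc$ under the general-position assumption, and bound its intersection with $\tilde\Delta_k$ by $2$ using strict convexity of $\sum_i e^{-z_i}$. The only difference is that you actually prove the final step (restricting $g(\bz)=\sum_i e^{-z_i}$ to the line and noting a strictly convex single-variable function attains a value at most twice) where the paper merely asserts that a line meets the boundary of a strictly convex set at most twice; this, together with your explicit flagging of the degenerate (linearly dependent) case, is a modest tightening rather than a different argument.
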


The vertices can be identified in $\bz$-space by directly enumerating all subsets of $k-1$ splitting inequalities.  For each subset, the (at most two) vertices can be computing numerically using a line search to determine where the $1$-$d$ affine space intersects $\tilde{\Delta}_k$ defined in \ref{eqn:z_simplex}.  Further observations relating continuity regions to vertices can be found in Appendix \ref{app:verts}, Cor. \ref{cor:vertex2} and Cor. \ref{cor:vertex3}.

\subsection{Covering of Continuity Sets}
Fully specifying the MVCS requires specifying the level-sets of the $p$-value function over $\Delta_{k}$.  Numerically, this requires specifying a \emph{discrete covering set} or \emph{cover} for each continuity set.  Note that naive approach to covering a continuity set -- covering the simplex and assigning each point to the continuity set that it belongs -- fails, as it can miss arbitrarily small continuity sets or those with a irregular, `pointy' geometry.

Consider a single continuity set $\Reg_{\o}$.  One approach to covering  $\Reg_{\o}$ is to consider subsets of the discrete simplex comprised of all the points inside $\Reg_{\o}$ \emph{and} all points outside $\Reg_{\o}$ but within a specified distance. 

\begin{defi}
$(\epsilon, \delta)$-cover.  We say $\mathcal{G}_{\o} \subset \Delta_{k,\eta}$ is an $(\epsilon, \delta)$-cover of $\Reg_{\o}$ if for every $\bp \in \Reg_{\o}$ there is a $\bq \in \mathcal{G}_{\o}$ such that $||\bp - \bq||_2 \leq \epsilon$ and for every $\bq \in \mathcal{G}_{\o}$ there is a $\bp \in \Reg_{\o}$ such that $||\bq - \bp||_2 \leq \delta$.
\end{defi}

The cover $\mathcal{G}_{\o}$ requires computing the minimum distance between a point $\bq \in \Delta_{k, \eta}$ and $\Reg_{\o}$.  To facilitate this computation, we proceed in two steps: (1) presenting a necessary condition for such minimum distance points to a single discontinuity variety and enumerating all points that satisfy the necessary condition, and (2) showing that this can be used to find the minimum distance to a continuity set $\Reg_{\omega}$ in the restricted setting of $k=3$.

\begin{cor} \label{cor:orth}
Fix $\bq \in \Delta_k$ and consider a discontinuity variety $f_\ell(\bp)$. Define $\bp^{-1} = [p_1^{-1} \  \dots  \ p_k^{-1}]^T$ and let $\bB \in \mathbb{R}^{k\times k }$ be given as $\bB = \bU\bU^T \mathrm{diag}(c_1, \dots, c_k)$ where the columns of $\bU \in \mathbb{R}^{k \times (k-1)}$ are an orthonormal basis for $\Delta_k$.
Then,
\begin{align} \label{eqn:quad} 
\lambda \bB \bp^{-1} = \bq - \bp
\end{align}
for some $\lambda \in \mathbb{R}$
is a necessary condition for any $\bp$ that satisfies 
$$\argmin_{\bp \in \Delta_k: f_{\ell}(\bp) = 0} ||\bp-\bq||_2.$$

\begin{proof}
Recall $f_\ell(\bp) = 1 - c_0 p_1^{c_1} p_2^{c_2}  \cdots p_k^{c_k}$.   The closest point $\{\bp \in \Delta_k: f_\ell(\bp)= 0 \}$ and a fixed point $\bq \in \Delta_k$
must satisfy the \emph{orthogonality condition} \cite{gubner2010ece}, Thm. 3.11, which  implies that the shortest distance between a point and a surface be normal to the surface.  Note that the normal vector of the discontinuity variety $\{\bp \in \Delta_k: f_\ell(\bp)=\gamma \}$ is the gradient of its level set  function $f_\ell(\bp)$ projected onto the simplex. By the orthogonality condition, if the columns of $\bU$ are a basis for $\{\bx \in \mathbb{R}^k : \bx^T \boldsymbol{1} =0\}$, we require
\begin{align} \label{eqn:orth1}
    \lambda \bU \bU^T \nabla f_\ell(\bp)= \bq-\bp 
\end{align}
of any point that minimizes the distance to $\bq$, where the $i$th element of $\nabla f_\ell(\bp)$ is
\begin{align}
\left[\nabla f_\ell(\bp)\right]_i = c_i p_i^{-1} (f_\ell(\bp)-1).
\end{align}
Together with (\ref{eqn:orth1}), this implies
\begin{align}
\lambda (f_{\ell}(\bp) -1) \bU \bU^T \bC \bp^{-1} = \bq - \bp.
\end{align}
where $\bp^{-1} = [p_1^{-1} \  \dots  \ p_k^{-1}]^T$ and $\bC = \mathrm{diag}(c_1, \dots, c_k)$.  Since $(f_\ell(\bp)-1) \in \mathbb{R}$, this gives the result.  We provide further details in Appendix \ref{app:orth_cond}.
\end{proof}
\end{cor}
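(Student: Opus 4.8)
The plan is to recognize this as a first-order (Lagrangian) optimality condition for minimizing the squared distance $\|\bp - \bq\|_2^2$ over the surface $\{\bp \in \Delta_k : f_\ell(\bp) = 0\}$. I would treat the two active constraints separately: the affine simplex constraint $\bp^T \boldsymbol{1} = 1$ is handled by restricting all feasible variations to the tangent space $T = \{\bx \in \R^k : \bx^T \boldsymbol{1} = 0\}$, of which the columns of $\bU$ form an orthonormal basis, and the level-set constraint $f_\ell(\bp) = 0$ is treated with a single Lagrange multiplier. At least initially I would assume the minimizer lies in the relative interior of the simplex, so the nonnegativity constraints $p_i \geq 0$ are inactive.

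First I would write the first-order condition. At a minimizer $\bp^\star$, the objective gradient $2(\bp^\star - \bq)$ must be orthogonal to every feasible tangent direction of the surface; equivalently, $\bq - \bp^\star$ projected onto $T$ is parallel to $\nabla f_\ell(\bp^\star)$ projected onto $T$. Since $\bp^\star, \bq \in \Delta_k$ already gives $(\bq - \bp^\star)^T \boldsymbol{1} = 0$, the vector $\bq - \bp^\star$ lies in $T$ and equals its own projection $\bU\bU^T(\bq-\bp^\star)$. This yields $\bq - \bp^\star = \lambda\, \bU\bU^T \nabla f_\ell(\bp^\star)$ for some $\lambda \in \R$, which is exactly \eqref{eqn:orth1}.

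Next I would compute the gradient explicitly. Differentiating $f_\ell(\bp) = 1 - c_0 p_1^{c_1}\cdots p_k^{c_k}$ gives $[\nabla f_\ell(\bp)]_i = -c_0 c_i p_i^{-1}\prod_j p_j^{c_j} = c_i p_i^{-1}(f_\ell(\bp)-1)$, using $c_0\prod_j p_j^{c_j} = 1 - f_\ell(\bp)$. In vector form $\nabla f_\ell(\bp) = (f_\ell(\bp)-1)\,\bC\,\bp^{-1}$ with $\bC = \mathrm{diag}(c_1,\dots,c_k)$. Substituting into the orthogonality condition and absorbing the scalar $\lambda(f_\ell(\bp^\star)-1)$ into a new multiplier (valid since $f_\ell(\bp^\star)-1 = -c_0\prod_j (p_j^\star)^{c_j} \neq 0$ for interior $\bp^\star$) produces $\bq - \bp^\star = \lambda\,\bU\bU^T\bC\,\bp^{-1} = \lambda \bB \bp^{-1}$, the claimed condition.

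I expect the main obstacle to be the rigorous justification of the orthogonality/Lagrange step rather than the algebra. Specifically, I would need a constraint qualification guaranteeing that $\bU\bU^T\nabla f_\ell(\bp^\star)$ is nonzero, so the feasible set is a genuine smooth $(k-2)$-dimensional surface near $\bp^\star$; this can fail where the variety is singular or tangent to the simplex. I would also have to address the boundary of $\Delta_k$: if the closest point lies on a face $\{p_i = 0\}$, the stated equality need not hold, so the result should be read as a necessary condition for interior minimizers (boundary cases requiring a separate KKT treatment). Finally I would note that replacing $\|\bp-\bq\|_2$ by $\|\bp-\bq\|_2^2$ leaves the minimizer unchanged by monotonicity, so working with the smooth squared objective is harmless.
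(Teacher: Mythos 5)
Your proposal is correct and follows essentially the same route as the paper's proof: you invoke the first-order orthogonality condition on the simplex (which the paper cites as a textbook theorem rather than deriving via Lagrange multipliers), project $\nabla f_\ell$ onto the tangent space $\{\bx : \bx^T \boldsymbol{1} = 0\}$ via $\bU\bU^T$, compute the identical gradient identity $\left[\nabla f_\ell(\bp)\right]_i = c_i p_i^{-1}(f_\ell(\bp)-1)$, and absorb the scalar $(f_\ell(\bp)-1)$ into the multiplier to obtain $\lambda \bB \bp^{-1} = \bq - \bp$. Your added caveats about interior minimizers, constraint qualification, and boundary faces are sound observations that the paper leaves implicit (its use of $\bp^{-1}$ already presupposes $p_i > 0$), but they do not alter the argument.
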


Our goal is to find \emph{all} points that satisfy the necessary condition in Cor. \ref{cor:orth} for $f_\ell(\bp) = 0$.  Note that (\ref{eqn:quad}) is a system of polynomial equations, and each equation involves a quadratic term of a single variable.  In some settings, this can be solved explicitly (see \cite{cox2013ideals}, chapter 3) using Grobner bases. 

\begin{obs}
Fix $\lambda$.  For $k=3$ the solutions to (\ref{eqn:quad}) can be found by a numerical procedure equivalent to finding the roots of a single variable differentiable function over an interval. See App. \ref{sec:quad_roots}. 
\end{obs}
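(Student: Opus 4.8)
The plan is to exploit the special structure of the projection $\bU\bU^T$ together with the simplex constraint to collapse the vector system (\ref{eqn:quad}) into a scalar root-finding problem. Since the columns of $\bU$ span $\{\bx \in \mathbb{R}^k : \bx^T\boldsymbol{1} = 0\}$, the projector has the explicit form $\bU\bU^T = \bI - \tfrac{1}{3}\boldsymbol{1}\boldsymbol{1}^T$ when $k=3$. Writing $w_i = c_i/p_i$, the $i$-th component of the left-hand side of (\ref{eqn:quad}) becomes
\begin{equation*}
[\bB\bp^{-1}]_i = \frac{c_i}{p_i} - \frac{1}{3}\sum_{j=1}^{3} \frac{c_j}{p_j},
\end{equation*}
so that the three equations are coupled only through the single shared sum $\sum_j c_j/p_j$.

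The key step is to decouple this shared term by introducing the scalar $\mu := \tfrac{\lambda}{3}\sum_{j} c_j/p_j$. With this substitution the $i$-th equation of (\ref{eqn:quad}) reads $\lambda c_i/p_i = q_i - p_i + \mu$, and clearing the denominator turns it into the quadratic
\begin{equation*}
p_i^2 - (q_i + \mu)\,p_i + \lambda c_i = 0, \qquad i = 1,2,3.
\end{equation*}
Hence, for a fixed $\lambda$ and any trial value of the single parameter $\mu$, each coordinate is given in closed form by
\begin{equation*}
p_i(\mu) = \tfrac{1}{2}\Big( (q_i + \mu) + s_i\sqrt{(q_i+\mu)^2 - 4\lambda c_i} \Big), \qquad s_i \in \{-1,+1\},
\end{equation*}
which is a differentiable function of $\mu$ on every interval where the discriminant $(q_i+\mu)^2 - 4\lambda c_i$ is nonnegative.

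It remains to pin down $\mu$. I would impose the simplex constraint and define the scalar residual $h(\mu) := \sum_{i=1}^{3} p_i(\mu) - 1$. Summing the three relations $\lambda c_i/p_i = q_i - p_i + \mu$ over $i$ and using $\sum_i q_i = 1$ gives $\lambda\sum_i c_i/p_i = 1 - \sum_i p_i + 3\mu$; comparing this with the defining identity $\mu = \tfrac{\lambda}{3}\sum_j c_j/p_j$ (equivalently $\lambda\sum_i c_i/p_i = 3\mu$) shows that the auxiliary parameter is consistent if and only if $\sum_i p_i = 1$. Consequently, a point $\bp = \bp(\mu)$ built from the quadratics solves (\ref{eqn:quad}) over $\Delta_3$ exactly at the zeros of $h$, so solving (\ref{eqn:quad}) is equivalent to finding the roots of the single differentiable function $h(\mu)$ on the interval of admissible $\mu$, which standard one-dimensional root finders (bisection or Newton's method) handle directly.

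The main obstacle is bookkeeping rather than analysis: the sign choices $s_i$ produce $2^3 = 8$ branches, so one actually obtains eight candidate functions $h(\mu)$, each smooth on its own feasibility interval, and the procedure must be run once per branch with the resulting roots filtered for admissibility ($\bp \in \Delta_3$ and $f_\ell(\bp)=0$). The remaining subtlety is to delimit the feasibility interval of $\mu$ precisely — it is bounded by the values at which a discriminant vanishes or some $p_i(\mu)$ leaves $[0,1]$ — and to argue that a root search over this bounded interval recovers every solution. Since each $p_i(\mu)$, and hence $h$, is continuously differentiable there, the numerical search is well posed; the detailed verification is deferred to Appendix \ref{sec:quad_roots}.
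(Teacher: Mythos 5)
Your proposal is correct, but it takes a genuinely different route from the paper. The paper's Appendix rewrites (\ref{eqn:quad}) in the generic form where an affine function of the three unknowns equals the reciprocal of a single unknown, and then performs pairwise elimination: solving equations (1) and (3) for $y$ and equating yields a relation quadratic in the remaining variables, hence roots $x_z^{\pm}(z)$ by the quadratic formula; repeating for the other pairs gives $y_z^{\pm}(z)$ and $x_y^{\pm}(y)$, and the solutions are the roots of the eight composed single-variable equations $x_z^{\pm}(z) = x_y^{\pm}\left(y_z^{\pm}(z)\right)$. You instead exploit the structure of $\bB = \bU\bU^T \mathrm{diag}(c_1,c_2,c_3)$: since $\bU\bU^T = \bI - \tfrac{1}{3}\boldsymbol{1}\boldsymbol{1}^T$, the coordinates are coupled only through the scalar $\sum_j c_j/p_j$, so freezing $\mu = \tfrac{\lambda}{3}\sum_j c_j/p_j$ decouples the system into three scalar quadratics $p_i^2 - (q_i+\mu)p_i + \lambda c_i = 0$, and consistency of $\mu$ is equivalent to the identity $\sum_i p_i = 1$, i.e.\ to $h(\mu)=0$. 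This equivalence is valid in both directions: any solution of (\ref{eqn:quad}) automatically satisfies $\boldsymbol{1}^T(\bq-\bp)=0$ because $\boldsymbol{1}^T\bU\bU^T = \boldsymbol{0}$, and hence appears as a zero of $h$ on the branch matching its sign pattern. Both routes end with $2^3=8$ branches of one-dimensional differentiable root finding, so the observation is established either way. Your version buys two things: closed-form per-coordinate expressions (rather than compositions of quadratic-formula roots, whose domains are harder to track), and a reduction that generalizes verbatim to arbitrary $k$ --- the same $\mu$ gives $k$ quadratics and one residual $h(\mu)$ with $2^k$ sign branches --- whereas the paper's elimination is tailored to three variables. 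Two small caveats apply to both arguments: differentiability of the branch functions requires the discriminants to be strictly positive (at a vanishing discriminant the square root has an unbounded one-sided derivative), and neither you nor the paper rigorously pins down a bounded search interval (solutions with some $p_i \to 0^{+}$ push $\mu$, respectively $z$, to infinity); you at least flag this issue explicitly.
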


\begin{obs}
Let $\bp_{j}(\lambda)$ enumerate the solutions to (\ref{eqn:quad}). Then, $f_{\ell}(\bp_{j}(\lambda)) : \mathbb{R} \mapsto \mathbb{R}$ is a differentiable function. Finding solutions to $f_{\ell}(\bp_{j}(\lambda)) = 0$ is equivalent to finding a roots of a single variable differentiable function.  See App. \ref{sec:grad}.
\end{obs}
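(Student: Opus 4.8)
The plan is to show that along each branch $\bp_j(\lambda)$ the composition $F(\lambda) := f_\ell(\bp_j(\lambda))$ inherits differentiability from the chain rule, so that locating points of the discontinuity variety reduces to a scalar root-finding problem. First I would rewrite (\ref{eqn:quad}) as the zero set of the map $\Phi(\bp,\lambda) := \bp + \lambda \bB \bp^{-1} - \bq$, which is real-analytic on the positive orthant $\{\bp : p_i > 0\}$. The branch $\bp_j(\lambda)$ is a continuous selection of solutions of $\Phi(\bp,\lambda)=0$, so the natural tool is the Implicit Function Theorem applied to $\Phi$. The Jacobian with respect to $\bp$ is
\begin{align*}
J(\bp,\lambda) \ = \ \frac{\partial \Phi}{\partial \bp} \ = \ \bI - \lambda \bB \, \mathrm{diag}(p_1^{-2}, \dots, p_k^{-2}),
\end{align*}
and wherever $J$ is nonsingular, $\bp_j(\lambda)$ is a locally unique, real-analytic function of $\lambda$; implicit differentiation of $\Phi(\bp_j(\lambda),\lambda)=0$ yields $\bp_j'(\lambda) = -J^{-1}\bB\bp^{-1}$, evaluated at $\bp = \bp_j(\lambda)$.

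Second, I would compute $F'$ explicitly. Since $f_\ell(\bp) = 1 - c_0 p_1^{c_1} \cdots p_k^{c_k}$ is real-analytic on the positive orthant with gradient $[\nabla f_\ell(\bp)]_i = c_i p_i^{-1}(f_\ell(\bp)-1)$ as established in Cor. \ref{cor:orth}, the chain rule gives $F'(\lambda) = \nabla f_\ell(\bp)^\top \bp_j'(\lambda)$ at $\bp = \bp_j(\lambda)$. This proves $F$ is differentiable wherever $J$ is nonsingular and, crucially, furnishes a closed-form derivative that a Newton iteration can consume. Finding the points on $\Var_\ell$ that satisfy the orthogonality condition then amounts to locating the roots of the single-variable differentiable function $F$, which is achievable by bisection on sign changes together with Newton refinement; this is exactly the claimed equivalence with one-dimensional root finding.

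The main obstacle is the set of $\lambda$ at which $J$ is singular --- the turning points of the branch, where $1/\lambda$ coincides with an eigenvalue of $\bB\,\mathrm{diag}(p_1^{-2},\dots,p_k^{-2})$ and $\lambda$ ceases to be a valid local coordinate (the branch develops a vertical tangent). I would argue this set is benign: along a fixed branch $\det J(\bp_j(\lambda),\lambda)$ is real-analytic in $\lambda$, so unless it vanishes identically --- a degenerate configuration of $\phat,\qhat$ that I would explicitly exclude --- its zeros are isolated, and $F$ is differentiable on each open interval between them. To pass smoothly through the turning points and make the numerics robust, I would reparametrize the solution curve by arc length $s$, so that $(\bp(s),\lambda(s))$ is smooth by standard continuation arguments and $s \mapsto f_\ell(\bp(s))$ is smooth; the real, positive, simplex-feasible portion of the branch is then a finite union of smooth arcs on which the roots of $f_\ell = 0$ are isolated and locatable. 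Finally, I would flag that the domain of $F$ is not all of $\R$ but the $\lambda$-interval(s) over which (\ref{eqn:quad}) admits a real positive solution (equivalently, the per-coordinate quadratics implicit in (\ref{eqn:quad}) have real roots), a caveat left tacit in the statement.
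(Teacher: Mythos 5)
Your proof is correct, modulo the genericity caveats you flag yourself, but it takes a genuinely different route from the paper's. The paper never invokes the Implicit Function Theorem: its substantive justification lives in App.~\ref{sec:quad_roots} (the pointer to App.~\ref{sec:grad} in the statement concerns the gradient/Lipschitz machinery) and works only for $k=3$. There, variables are eliminated pairwise via the quadratic formula, each solution branch is written explicitly as a composition of quadratic-formula roots $x_z^{\pm}(z)$, $y_z^{\pm}(z)$, $x_y^{\pm}(y)$ whose coefficients depend differentiably on $\lambda$, and differentiability of $f_\ell(\bp_j(\lambda))$ follows by composition of explicit differentiable functions; the eight sign combinations simultaneously enumerate the branches. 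Your IFT argument on $\Phi(\bp,\lambda)=\bp+\lambda\bB\bp^{-1}-\bq$ buys generality (arbitrary $k$, not just $k=3$) and a clean Newton-ready derivative $F'(\lambda)=-\nabla f_\ell(\bp_j(\lambda))^{T}J^{-1}\bB\bp^{-1}$; the paper's explicit route buys what a purely local IFT statement cannot provide, namely a constructive census of \emph{all} branches, which is what the covering computation actually consumes. Two of your caveats are genuine improvements on the paper: the turning-point issue (your singular $J$ corresponds exactly to a vanishing discriminant or leading coefficient in the paper's quadratic formulas, which the paper passes over silently), and the remark that the domain of $F$ is only the set of $\lambda$ admitting real positive solutions, which the paper's notation $\mathbb{R}\mapsto\mathbb{R}$ obscures. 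The one soft spot in your argument---asserting that $\det J$ restricted to a branch is analytic with isolated zeros, which is slightly circular since the branch is only known to be analytic where $J$ is nonsingular---is repaired by your own arc-length continuation remark, so it does not amount to a gap.
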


Equipped with the ability to compute the distance to an individual variety, we next compute the minimum distance from an arbitrary point to a \emph{continuity set}. Consider only the case for $k=3$.  Suppose a continuity set contains discontinuity varieties $f_{i_1}, \dots\, f_{i_m}$, and vertices $\bv_1, \dots\, \bv_l$.   We can compute the distance from a point $\bq$ to $f_{i_1}, \dots\, f_{i_m},\bv_1,\dots\,\bv_l$.  Notice that distance from an arbitrary point to a discontinuity variety may not be feasible for that continuity set (which is readily verified). This point is then excluded computing the minimum.

To complete this process, we can evenly grid the whole simplex so that grid points are separated by at most $\epsilon$. Next, we only pick points inside of continuity set, or whose distance to the continuity set are less than or equal to $\delta$. Then these points form a $(\epsilon, \delta)$-cover for the continuity set.  The approach is detailed in Alg. \ref{alg:findd}.

\begin{cor}
$\mathcal{G}_{\o}$ is an $(\epsilon, \delta)$-cover of $\Reg_{\o}$.
\end{cor}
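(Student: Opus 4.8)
The plan is to verify the two defining properties of an $(\epsilon,\delta)$-cover in turn, treating the covering (``$\epsilon$'') direction and the tightness (``$\delta$'') direction separately. Throughout, write $d(\bq,\Reg_{\o}) := \inf_{\bp \in \Reg_{\o}} \|\bq - \bp\|_2$, and recall that the construction produces $\mathcal{G}_{\o} = \{\bq \in \Delta_{k,\eta} : \bq \in \Reg_{\o} \text{ or } d(\bq,\Reg_{\o}) \le \delta\}$, where the grid $\Delta_{k,\eta}$ is refined so that every $\bp \in \Delta_k$ lies within $\epsilon$ of some grid point. The corollary itself is largely bookkeeping; the real content sits in the preceding subsection, which supplies the distance oracle $d(\bq,\Reg_{\o})$ that makes $\mathcal{G}_{\o}$ well defined.

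First I would establish the covering direction. Fix $\bp \in \Reg_{\o}$. By the grid resolution there is a grid point $\bq \in \Delta_{k,\eta}$ with $\|\bp - \bq\|_2 \le \epsilon$, and then $d(\bq,\Reg_{\o}) \le \|\bq - \bp\|_2 \le \epsilon$. Provided $\delta \ge \epsilon$ (the natural relationship implicit in the construction, since the threshold should absorb the grid resolution), this gives $d(\bq,\Reg_{\o}) \le \delta$, so $\bq \in \mathcal{G}_{\o}$. Hence every $\bp \in \Reg_{\o}$ has a point of $\mathcal{G}_{\o}$ within $\epsilon$, which is the first half of the definition. I would flag that $\delta \ge \epsilon$ is genuinely needed here: if $\Reg_{\o}$ is a sliver thinner than the grid spacing, a point near its boundary may have no interior grid point nearby, so without $\delta \ge \epsilon$ the only grid points within $\epsilon$ could fail the threshold.

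Next I would handle the tightness direction. Fix $\bq \in \mathcal{G}_{\o}$. If $\bq \in \Reg_{\o}$, take $\bp = \bq$ and the distance is $0 \le \delta$. Otherwise $d(\bq,\Reg_{\o}) \le \delta$, so there is $\bp \in \overline{\Reg_{\o}}$ with $\|\bq - \bp\|_2 \le \delta$. Because $\Reg_{\o}$ is an open set (defined by strict splitting inequalities), the minimizing $\bp$ may lie on a bounding variety $\Var_\ell$ in $\partial\Reg_{\o}$ rather than in $\Reg_{\o}$ itself; I would resolve this minor technicality either by reading the cover condition against $\overline{\Reg_{\o}}$, or by noting that such a boundary minimizer is a limit of points of $\Reg_{\o}$ and hence approachable from within. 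This yields the second half of the definition.

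The hard part will not be these two steps but the correctness of the distance oracle that defines $\mathcal{G}_{\o}$. Computing $d(\bq,\Reg_{\o})$ requires combining the per-variety projection of Cor.~\ref{cor:orth} — where the orthogonality condition reduces the closest point on a single variety to root-finding — with the fact that, for $k=3$, the closest point of the whole continuity set is attained on one of its bounding varieties $f_{i_1},\dots,f_{i_m}$ or at one of its vertices $\bv_1,\dots,\bv_l$, after discarding candidate projections that are infeasible for $\Reg_{\o}$. Establishing that this finite minimization returns the true $d(\bq,\Reg_{\o})$, and in particular that projections landing outside $\Reg_{\o}$ are correctly excluded, is where the care is needed; once that oracle is verified, the $(\epsilon,\delta)$-cover property follows from the grid resolution and threshold exactly as above.
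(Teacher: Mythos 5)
Your proof is correct and follows essentially the same route as the paper's: both rest on the grid $\Delta_{k,\eta}$ being an $(\epsilon,0)$-cover of the simplex (the $\eta = \lceil \sqrt{k}/\epsilon\rceil$ relationship proved in the appendix), the hypothesis $\delta \geq \epsilon$ guaranteeing that no grid point covering part of $\Reg_{\o}$ is discarded, and the algorithm's distance threshold giving the $\delta$-direction by construction. Your version is somewhat more careful than the paper's three-line argument --- in particular flagging the open-set boundary technicality and the reliance on the correctness of the distance computation from Cor.~\ref{cor:orth} --- but these are refinements of, not departures from, the same argument.
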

\begin{proof}
Note that $\Delta_{k, \eta}$ is an $(\epsilon, 0)$ cover of $\Delta_k$ when $\eta = \left \lceil \frac{\sqrt{k}}{\epsilon} \right \rceil$ (see 
Appendix \ref{app:eps_eta}).
Since $\delta \geq \epsilon$, none of the points that cover any portions of the continuity set are discarded.  Every point that remains in  $\mathcal{G}_{\o}$ is closer than $\delta$ to at least one point in $\Reg_{\o}$, and we conclude the result.  
\end{proof}

\begin{figure}
\centering
\includegraphics[width=0.44\textwidth]{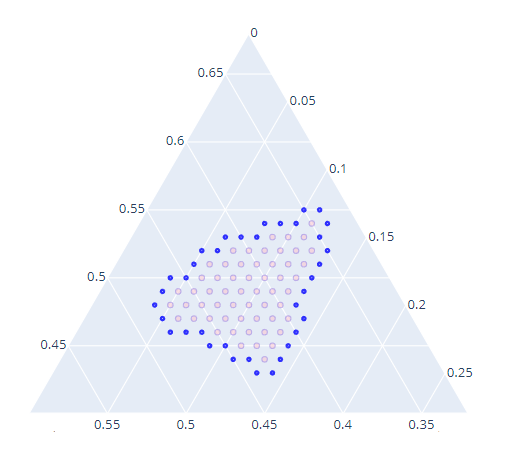}
\caption{An example of $(\epsilon, \delta)$-cover of $\Reg_{\o}$. Grid points are $\delta$ away from each other. Blue dots are outer points with distance less than $\epsilon$ away from $\Reg_{\o}$. Pink dots are inside $\Reg_{\o}$. Figure generated by Plotly \cite{plotly}.
\label{fig:delta_epsilon_cover}}
\end{figure}

\begin{algorithm} 
\caption{ \label{alg:findd} Finding an $(\epsilon, \delta)$-cover for $\Reg_{\o}$ }
\begin{algorithmic}[1] 
\State{\textbf{Input}}: $\delta \geq \epsilon \geq 0$, $\phat  \in \Delta_{k, n}$, continuity set $\Reg_{\o}$ with vertices $\mathcal{V} = \{v_1,...v_l\}$ and discontiuity varieties $\mathcal{F} = \{f_{i_1},\dots\, f_{i_m}\}$ that touch $\Reg_{\o}$.
\State {initialize: set $\mathcal{G}_{\o}=\{ \}$ and 
$\eta = \left \lceil \frac{\sqrt{k}}{\epsilon} \right \rceil$} 
\For {$p \in \Delta_{k,\eta}$}
\State $\mathcal{Q}$ = $ \mathcal{V} \cup \{q_j: \argmin_{q_j:j = 1, \cdots m} d(p,q_j), p \in f_{i_j} \cap \Reg_{\o}, f_{i_j} \in \mathcal{F} \} $
\If {$p \in \Reg_{\o}$ or $\min d(p, q)\leq \delta, q \in \mathcal{Q} $}
\State {Append $p$ to $\mathcal{G}_{\o}$}
\EndIf
\EndFor

\State {\textbf{Return}: $(\epsilon, \delta)$-cover $\mathcal{G}_{\o}$}

\end{algorithmic}
\end{algorithm}

\section{Applications: A/B Testing}
Suppose we have two empirical observations, $\widehat \bp_1$ and $\widehat \bp_2$, each which have a MVCs, $\mathcal{C}^{\star}_{\delta}(\widehat \bp_1)$ and $\mathcal{C}^{\star}_{\delta}(\widehat \bp_2)$.  Our aim is to determine whether the two MVCs intersect.

Given the full enumeration of the continuity sets and their corresponding $(\epsilon, \delta)$ covers, the $p$-value function can be fully specified numerically, and the corresponding level sets can be computed to required precision.  More precisely, for determining an empty intersection, we consider the following approach.  Recall the splitting inequalities will split each confidence sets into continuity sets, where $p$-value is continuous under each continuity sets. Because we are now measuring two confidence sets, we want to make sure that in each continuity set, the $p$-value functions for both confidence sets are continuous. We can achieve this by including the splitting inequalities for both confidence sets and computing the corresponding continuity sets. This will double the number of splitting inequalities.

To continue, we cover the new continuity sets, and measure the $p$-value function under $\widehat{\bp}_1$ and $\widehat{\bp}_2$ for each point in $\mathcal{G}_\o$. We can bound the $p$-value over the ball of radius $\epsilon$, denoted $R_\epsilon(\bp)$, using the Lipschitz constant $L$:
\begin{align} \label{eqn:lip1}
    \left \vert \rho_{\phat_i}(\bq) - \rho_{\phat_i}(\bp) \right \vert\leq L \left \vert\bq - \bp\right \vert\leq L\epsilon, \bq \in R_\epsilon(\bp)
\end{align}
which implies $
    \rho_{\phat_i}(\bp)- L\epsilon \leq \rho_{\phat_i}(\bq) \leq \rho_{\phat_i}(\bp)+ L\epsilon$.
See App. \ref{sec:grad}) to bound the Lipschitz constant $L$.

However, it is possible that parts of the ball are in both confidence sets, which means $\rho_{\phat}(\bp)+ L\epsilon > 1 - \alpha$ for the $p$-value of a neighboring point in both sets, and it will remain unclear whether their intersection is empty. In this case, we decrease the value of $\epsilon$ and repeat the approach until such situation no longer happens. 

\section{Summary}
In this paper we studied the {\em minimum volume confidence sets} for the multinomial parameter.  We showed that the MVCs, which are prescribed as the level sets of a discontinuous function (the exact $p$-value) can be computed to arbitrary precision by enumerating and covering the regions over which they are continuous.  This allowed us to answer a basic question in A/B testing in the restricted setting of three categories.  While the approach sheds light on the general setting of more than three categories, defining a covering for the continuity sets in general remains an open problem.  The primary challenge lies in covering the lower dimensional faces and edges of the continouity sets.

\bibliography{main}

\begin{thebibliography}{10}
\providecommand{\url}[1]{#1}
\csname url@samestyle\endcsname
\providecommand{\newblock}{\relax}
\providecommand{\bibinfo}[2]{#2}
\providecommand{\BIBentrySTDinterwordspacing}{\spaceskip=0pt\relax}
\providecommand{\BIBentryALTinterwordstretchfactor}{4}
\providecommand{\BIBentryALTinterwordspacing}{\spaceskip=\fontdimen2\font plus
\BIBentryALTinterwordstretchfactor\fontdimen3\font minus
  \fontdimen4\font\relax}
\providecommand{\BIBforeignlanguage}[2]{{%
\expandafter\ifx\csname l@#1\endcsname\relax
\typeout{** WARNING: IEEEtran.bst: No hyphenation pattern has been}%
\typeout{** loaded for the language `#1'. Using the pattern for}%
\typeout{** the default language instead.}%
\else
\language=\csname l@#1\endcsname
\fi
#2}}
\providecommand{\BIBdecl}{\relax}
\BIBdecl

\bibitem{jamieson2014lil}
K.~Jamieson, M.~Malloy, R.~Nowak, and S.~Bubeck, ``lil’ucb: An optimal
  exploration algorithm for multi-armed bandits,'' in \emph{Conference on
  Learning Theory}.\hskip 1em plus 0.5em minus 0.4em\relax PMLR, 2014, pp.
  423--439.

\bibitem{malloy2021ISIT}
M.~L. Malloy, A.~Tripathy, and R.~D. Nowak, ``Optimal confidence sets for the
  multinomial parameter,'' in \emph{2021 IEEE International Symposium on
  Information Theory (ISIT)}, 2021, pp. 2173--2178.

\bibitem{casella2021statistical}
G.~Casella and R.~L. Berger, \emph{Statistical inference}.\hskip 1em plus 0.5em
  minus 0.4em\relax Cengage Learning, 2021.

\bibitem{chafai2009confidence}
D.~Chafai and D.~Concordet, ``Confidence regions for the multinomial parameter
  with small sample size,'' \emph{Journal of the American Statistical
  Association}, vol. 104, no. 487, pp. 1071--1079, 2009.

\bibitem{malloy2020optimal}
M.~L. Malloy, A.~Tripathy, and R.~D. Nowak, ``Optimal confidence regions for
  the multinomial parameter,'' \emph{arXiv preprint arXiv:2002.01044}, 2020.

\bibitem{jamieson2013finding}
K.~Jamieson, M.~Malloy, R.~Nowak, and S.~Bubeck, ``On finding the largest mean
  among many,'' \emph{arXiv preprint arXiv:1306.3917}, 2013.

\bibitem{malloy2015contamination}
M.~L. Malloy, S.~Alfeld, and P.~Barford, ``Contamination estimation via convex
  relaxations,'' in \emph{2015 IEEE International Symposium on Information
  Theory (ISIT)}.\hskip 1em plus 0.5em minus 0.4em\relax IEEE, 2015, pp.
  1189--1193.

\bibitem{malloy2012quickest}
M.~L. Malloy, G.~Tang, and R.~D. Nowak, ``Quickest search for a rare
  distribution,'' in \emph{2012 46th Annual Conference on Information Sciences
  and Systems (CISS)}.\hskip 1em plus 0.5em minus 0.4em\relax IEEE, 2012, pp.
  1--6.

\bibitem{malloy2013sample}
------, ``The sample complexity of search over multiple populations,''
  \emph{IEEE transactions on information theory}, vol.~59, no.~8, pp.
  5039--5050, 2013.

\bibitem{brown1995optimal}
L.~D. Brown, G.~Casella, and J.~Gene~Hwang, ``Optimal confidence sets,
  bioequivalence, and the limacon of pascal,'' \emph{Journal of the American
  Statistical Association}, vol.~90, no. 431, pp. 880--889, 1995.

\bibitem{sterne1954some}
T.~E. Sterne, ``Some remarks on confidence or fiducial limits,''
  \emph{Biometrika}, vol.~41, no. 1/2, pp. 275--278, 1954.

\bibitem{crow1956confidence}
E.~L. Crow, ``Confidence intervals for a proportion,'' \emph{Biometrika},
  vol.~43, no. 3/4, pp. 423--435, 1956.

\bibitem{resin2020simple}
J.~Resin, ``A simple algorithm for exact multinomial tests,'' 2020.

\bibitem{plotly}
\BIBentryALTinterwordspacing
P.~T. Inc. (2015) Collaborative data science. Montreal, QC. [Online].
  Available: \url{https://plot.ly}
\BIBentrySTDinterwordspacing

\bibitem{boyd2007tutorial}
S.~Boyd, S.-J. Kim, L.~Vandenberghe, and A.~Hassibi, ``A tutorial on geometric
  programming,'' \emph{Optimization and engineering}, vol.~8, no.~1, pp.
  67--127, 2007.

\bibitem{boyd2004convex}
S.~Boyd, S.~P. Boyd, and L.~Vandenberghe, \emph{Convex optimization}.\hskip 1em
  plus 0.5em minus 0.4em\relax Cambridge university press, 2004.

\bibitem{gubner2010ece}
J.~A. Gubner, ``Signal synthesis and recovery techniques, course notes,''
  \emph{Online. Available:
  gubner.ece.wisc.edu/ece735/ece735notesDir/ece735notes2020Jan11.pdf}, 2020.

\bibitem{cox2013ideals}
D.~Cox, J.~Little, and D.~OShea, \emph{Ideals, varieties, and algorithms: an
  introduction to computational algebraic geometry and commutative
  algebra}.\hskip 1em plus 0.5em minus 0.4em\relax Springer Science \& Business
  Media, 2013.

\end{thebibliography}
\bibliographystyle{IEEEtran}

\onecolumn

\section*{Appendix}

\subsection{Continuity Regions}
\label{app:verts}
\begin{cor} \label{cor:vertex2} 
For $k=3$, the number continuity regions whose closure contains zero vertices (termed a zero-vertex continuity region) is less than or equal to $m$.
\begin{proof}
When $k=3$, a zero-vertex continuity region has at most one active splitting equality (a continuity region has an active splitting equality if its closure contains a point such that $f_{\ell}(\bp) = 0 $ for some $\ell$), or it would have one or more vertices.  Note this is not necessarily true for $k>3$, since a zero-vertex continuity region could have more than one active splitting equality. 

Moreover, a splitting equality can be active for at most one zero-vertex continuity region. To see this note $\{\bz \in \tilde{\Delta}_k : \bz^T \bc > c \}$ is a connected set for $k=3$.  There are $m-1$ splitting equalities; this accounts for at most $m-1$ zero-vertex continuity regions. 

The possibility of a zero vertex set with no active splitting equalities remains. This region would contain the entire simplex; hence there can be at most one. 
\end{proof}
\end{cor}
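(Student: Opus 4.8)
The plan is to pass to $\bz$-space and then bound the zero-vertex regions by charging each one to a distinct splitting equality, reserving a single exceptional region for the degenerate case. First I would use Obs.~\ref{obs:zspace} to replace each discontinuity variety $\Var_\ell$ by the section of the surface $\tilde\Delta_3 = \{\bz \in \R^{3+} : \sum_i e^{-z_i} = 1\}$ cut by a hyperplane $\bz^T\bc_\ell = c_\ell$, so that a vertex (Def.~\ref{def:vertex}) is exactly a point of $\tilde\Delta_3$ lying on two such hyperplanes. From the proof of Cor.~\ref{cor:vertex1} I would carry over two geometric facts for reuse: the sublevel set $\{\bz : \sum_i e^{-z_i}\le 1\}$ is strictly convex, so the line formed by two of the hyperplanes meets $\tilde\Delta_3$ in at most two points, i.e.\ any two varieties cross in at most two vertices; and each halfspace section $\{\bz \in \tilde\Delta_3 : \bz^T\bc_\ell > c_\ell\}$ is connected when $k=3$.

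The argument then has two steps. In Step~1 I would show a zero-vertex region has at most one active variety, by traversing its boundary: the interior boundary is a union of arcs lying on active varieties, and any transition from an arc of $\Var_a$ to an arc of $\Var_b$ must occur at a point of $\tilde\Delta_3$ with $f_a = 0$ and $f_b = 0$ simultaneously, that is, at a vertex; since a zero-vertex region has no vertex in its closure, only one variety is active. In Step~2 I would charge each zero-vertex region possessing an active variety to that unique variety and argue the charge is injective, using connectedness of the halfspace sections so that a single variety accounts for at most one such region. The regions carrying an active variety then number at most $m-1$; the only remaining case is a region whose closure meets no variety, which must be all of $\Delta_3$ and hence is unique. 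Together these give $(m-1)+1 = m$.

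The hard part will be making Steps~1 and~2 airtight against degenerate configurations. Two varieties whose defining line misses $\tilde\Delta_3$ do not cross inside the simplex and could, a priori, bound a common region without creating any vertex, which threatens the single-active-variety claim; symmetrically, one uncrossed variety splits $\tilde\Delta_3$ into two connected sides, each a candidate zero-vertex region, which strains the one-region-per-variety claim. I expect to resolve these by exploiting the at-most-two-crossings property of Cor.~\ref{cor:vertex1} to control how the vertex-free arcs of each variety reach $\partial\Delta_3$, and by handling arcs of the simplex boundary separately from arcs of the varieties. If the direct boundary-traversal argument proves fragile, I would fall back on an incidence or Euler-characteristic count for the planar arrangement of the $m-1$ variety curves in $\Delta_3$, bounding the number of vertex-free faces directly.
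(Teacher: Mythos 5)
Your proposal follows essentially the same route as the paper's own proof: the identical three-part decomposition in which (i) a zero-vertex region for $k=3$ can have only one active splitting equality, since a change of active variety along its boundary would force a vertex, (ii) each equality is charged to at most one such region via connectedness of the sections $\{\bz \in \tilde{\Delta}_3 : \bz^T \bc_\ell > c_\ell\}$, and (iii) at most one region has no active equality since it would have to contain the whole simplex, giving the same count $(m-1)+1 = m$. The degenerate configurations you flag (a single variety whose two sides are both vertex-free regions) are a genuine subtlety, but they affect the paper's argument in exactly the same way, and in those cases the bound $m$ holds trivially, so no departure from this approach is required.
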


\begin{cor} \label{cor:vertex3}
For $k=3$, there are at fewer than $8(n+2)^4$ continuity regions.  
\begin{proof}
The proof follows from Cor. \ref{cor:vertex1} and the observation that each vertex can belong to at most 4 continuity regions, which implies there are at most $8 { m-1 \choose k-1}$ continuity regions with one or more vertices.  Since there at most $m$ zero-vertex continuity regions from Cor. \ref{cor:vertex1}, we conclude
\begin{eqnarray*}
r &\leq& 8  {m-1 \choose k-1} + m \leq 8 m^2 \leq 8 (n+2)^4.
\end{eqnarray*}
\end{proof} 
\end{cor}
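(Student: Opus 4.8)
The plan is to partition the continuity regions into two classes according to whether their closure contains a vertex (in the sense of Def.~\ref{def:vertex}), bound each class separately, and add the counts. Write $r$ for the total number of continuity regions, $r_0$ for the number of zero-vertex regions, and $r_1$ for the number of regions whose closure contains at least one vertex, so that $r = r_0 + r_1$. For the zero-vertex class I would invoke Cor.~\ref{cor:vertex2} directly, which already establishes $r_0 \leq m$ when $k=3$; this disposes of the first class with no further work.

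For the vertex-containing class, the strategy is a vertex--region incidence count. Cor.~\ref{cor:vertex1} bounds the total number of vertices by $2\binom{m-1}{k-1}$. The key geometric step is to show that, for $k=3$, any single vertex lies on the boundary of at most four continuity regions. Because every region counted by $r_1$ has at least one vertex on its boundary, summing the incidences over vertices gives each such region at least once, so
\begin{align*}
r_1 \ \leq \ 4 \cdot \#\{\text{vertices}\} \ \leq \ 8\binom{m-1}{k-1}.
\end{align*}
The ``at most four regions per vertex'' claim is where the real content lies, and it is the step I expect to be the main obstacle. A vertex is, by Def.~\ref{def:vertex}, a transversal intersection of exactly $k-1=2$ discontinuity varieties, and two smooth curves crossing transversally partition a small neighborhood of the crossing point into exactly four open sectors; hence at most four continuity regions can have that point in their closure. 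The subtlety is that this rests on a \emph{general-position} assumption: one must rule out (or separately account for) the degenerate case in which three or more varieties pass through a common point, which would create up to six local sectors, as well as non-transversal tangencies. This is the same general-position hypothesis already invoked in the proof of Cor.~\ref{cor:vertex1}, so I would state it explicitly and restrict to that regime, noting that the exceptional configurations form a lower-dimensional set in the space of problem parameters.

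Finally, I would combine the two bounds and simplify using elementary estimates. With $k=3$ we have $\binom{m-1}{k-1} = \binom{m-1}{2} \leq m^2/2$, so
\begin{align*}
r \ = \ r_0 + r_1 \ \leq \ m + 8\binom{m-1}{2} \ \leq \ m + 4m^2 \ \leq \ 8m^2,
\end{align*}
where the last inequality holds for $m \geq 1$. Substituting $m = \binom{n+k-1}{k-1} = \binom{n+2}{2} = \tfrac{(n+2)(n+1)}{2} \leq (n+2)^2$ yields $8m^2 \leq 8(n+2)^4$, the claimed bound. The remaining inequalities ($\binom{m-1}{2} \le m^2/2$, $m + 4m^2 \le 8m^2$, and $m \le (n+2)^2$) are all routine, so the entire difficulty is concentrated in the local sector-counting argument and the justification of the general-position hypothesis that supports it.
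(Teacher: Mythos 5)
Your proposal is correct and follows essentially the same route as the paper: partition the regions into zero-vertex ones (bounded by $m$ via Cor.~\ref{cor:vertex2}) and vertex-containing ones (bounded by four regions per vertex times the vertex count from Cor.~\ref{cor:vertex1}), then combine as $r \leq 8\binom{m-1}{2} + m \leq 8m^2 \leq 8(n+2)^4$. If anything your write-up is more careful than the paper's: you justify the four-sectors-per-vertex claim via transversality, flag the general-position hypothesis it rests on, and correctly attribute the zero-vertex bound to Cor.~\ref{cor:vertex2} (the paper's proof cites Cor.~\ref{cor:vertex1} for that step, which is evidently a typo).
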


\subsection{Proof of $\eta$, $\epsilon$ relationship} \label{app:eps_eta}

Given an arbitrary point in the continuous simplex, $\bp \in \Delta_k$, consider finding a \emph{discrete neighbor} of $\bp$, denoted $\bp'$, where $\bp' \in \Delta_{k,\eta}$ is given by the following procedure.  First set $p_1' = \mbox{round}(\eta p_1)/\eta$.  If $p_1 - p_1' < 0$, set $p_2' = \lceil p_2 \eta \rceil /\eta$, otherwise $p_2' = \lfloor p_2 \eta \rfloor /\eta$.  If $\sum_{j=1,2} (p_j-p_j') < 0$, set $p_3' = \lceil p_3 \eta \rceil /\eta$, and $p_3' = \lfloor p_3 \eta \rfloor /\eta$ otherwise.  Continuing until the $k$th entry of $\bp'$, and set $p_k' = 1 - \sum_{i<k} p_i'$.   By construction $\bp' \in \Delta_{k,\eta}$ and we have $|p_i - p_i' | < 1/\eta$ for $i = 1,\dots, k$ which is trivial for $i=1,\dots,k-1$.  For the last entry, note that we have $\sum_{i < k} (p_i-p_i') \leq 1/\eta$  since each term deviates by at most $1/\eta$, and we always choose its sign to minimize the absolute value of the quantity. 

Since $|p_i - p_i' | < 1/\eta$, we conclude 
\begin{align}
    \left\vert \left \vert \bp - \bp' \right\vert \right\vert \leq \sqrt{\frac{k}{\eta^2}}.
\end{align}
Setting $\eta \geq \frac{\sqrt{k}}{\epsilon}$ ensures that we cover any point $\bp$ in the continuous simplex with a discrete point that is at most $\epsilon$ away. Thus, if $\eta \geq \frac{\sqrt{k}}{\epsilon}$, then $\Delta_{k,\eta}$ is an $(\epsilon, 0)$ cover of $\Delta_k$.

\subsection{Orthogonality Condition} \label{app:orth_cond}
We provide further details on the necessary conditions for a minimum distance point on a variety in the simplex. 
By definition, the gradient $\nabla f_{\ell}(\bp)$ is perpendicular to the tangent plane of the $k-1$ dimensional variety $\{\bp : f_{\ell}(\bp) = 0 \}, \bp \in \mathbb{R}^k$.  Define the $(k-1)$-d subspace that spans the tangent plane, namely $\bV = \{\bx \in \mathbb{R}^k: \bx^T \nabla f_{\ell}(\bp) = 0\}$.  Likewise, consider the $(k-1)$-d subspace $\bU = \{\bx \in \mathbb{R}^k: \bx^T \boldsymbol{1} = 0\}$.  Define the $(k-2)$-d subspace 
\begin{eqnarray}
\bV_{\Delta} = \bV \cap \bU = \left\{\bx : \begin{bmatrix} \boldsymbol{1}^T \\ \nabla f_{\ell}(\bp)^T \end{bmatrix} \bx = \begin{bmatrix} 0 \\ 0 \end{bmatrix} \right\}
\end{eqnarray}
which represents the tangent plane of the $k-2$ dimensional variety \emph{in the simplex}.   Next, to establish the orthogonality conditions in this setting, first note that $\mathrm{proj}_{\bU} \nabla f_{\ell}(\bp)$ is perpendicular to any $\bx \in \bV_{\Delta}$. To see this, let the columns of $U$ be an orthonormal basis for the subspace $\bU$.  Then $\mathrm{proj}_{\bU} \nabla f_{\ell}(\bp) = U U^T  \nabla f_{\ell}(\bp)$.  For any $\bx \in \bV_\Delta$,  $(U U^T  \nabla f_{\ell}(\bp))^T \bx =   \nabla f_{\ell}(\bp)^T U U^T  \bx = 0$ since the columns of $U$ are orthogonal to $\bx \in \bV_\Delta$ by definition.  

Consider a point $\bq \not \in S \subset \Delta_k$. The orthogonality condition \cite{gubner2010ece} requires that if a point $\bp^* \in S \subset \Delta_k$ satisfies 
\begin{eqnarray}
\bp^* =  \arg\min_{\bp \in S} ||\bq - \bp||^2
\end{eqnarray}
then $\bp^*-\bq$ must be orthogonal to the tangent plane of $S$ at $\bp^*$.  The orthogonality condition, together with the fact the the tangent plane of the variety $\{\bp \in \Delta_k : f_{\ell}(\bp) = 0 \}$ is orthogonal to $U U^T  \nabla f_{\ell}(\bp)$ implies the necessary condition shown in Cor. \ref{cor:orth}.



\subsection{Solutions to Quadratic System} \label{sec:quad_roots}
Note that we can enumerate and find the solutions to the quadratic system of equations in the specific case of $k=3$.  In particular, $\lambda \bB \bp^{-1} = \bq - \bp$, for fixed $\lambda$, is equivalent to 
\begin{align*}
   a_{1,1} x + a_{2,1} y + a_{3,1} z + d_1 = \nicefrac{1}{x} \qquad (1) \\
    a_{2,1}  x + a_{2,2} y + a_{3,2} z + d_2  = \nicefrac{1}{y} \qquad (2) \\
    a_{3,1}  x + a_{3,2} y + a_{3,3} z + d_3 = \nicefrac{1}{z} \qquad (3) 
\end{align*}
for variables $x, y, z \in \mathbb{R}$ and constants $a_{i,j}$ (note that $\bB$ is symmetric by definition).   
Solving (1) and (3) for $y$ and equating gives
\begin{align*}
\nicefrac{1}{a_{2,1}} \left(\nicefrac{1}{x} - a_{1,1} x - a_{3,1} z - d_1 \right) = \nicefrac{1}{a_{2,3}} \left(\nicefrac{1}{z} - a_{3,1} x - a_{3,3} z - d_3 \right)
\end{align*}
which after multipliyng both sizes by $xz$, is quadratic in both $x$ and $z$:
\begin{align*}
(a_{2,1} a_{3,1}  - a_{2,3} a_{1,1}) x^2 z +
(a_{2,1} a_{3,3}  - a_{2,3} a_{3,1}) xz^2 +
(a_{2,1} d_3  -  a_{2,3} d_1) xz +
a_{2,3} z 
- a_{2,1}x = 0
\end{align*}
We can write the roots of $x$ as an explicit function of $z$:
\begin{align*}
(x_z^{+}(z), x_z^{-}(z)) = \frac{-b \pm \sqrt{b^2 - 4ac}}{2a}
\end{align*}
which, for simplicity of notation, we denote as $x_z^{+}(z)$ and $x_z^{-}(z)$, where
\begin{eqnarray*}
a &=& (a_{2,1} a_{3,1}  - a_{2,3} a_{1,1}) z \\
b &=& (a_{2,1} a_{3,3}  - a_{2,3} a_{3,1}) z^2 + (a_{2,1} d_3  -  a_{2,3} d_1)z - a_{2,1} \\
c &=& a_{2,3} z.
\end{eqnarray*}
Following the same procedure, we can solve (2) and (3) for $x$ and equate the two equations, writing the two roots of $y$ as a function for $z$; namely, $y_z^{+}(z)$ and $y_{z}^{-}(z)$.  Lastly, solving (1) and (2) for $z$, and equating, we can write the roots of $x$ as a function of $y$, denoted $x_y^{+}(y)$ and $x_y^{-}(y)$.

Equipped with explicit expressions for $x$ as a function of $y$ and for $x$ as a function $z$, and an explicit relationship between $y$ and $z$, we have eight parametric expressions of the form involving only a single variable, $z$:
\begin{eqnarray*}
    x_z^{+}(z) &\overset{(1)}{=}& x_y^{+}\left(y_z^{+}(z) \right) \\
    x_z^{+}(z) &\overset{(2)}{=}& x_y^{+}\left(y_z^{-}(z) \right) \\
    \vdots \\
    x_z^{-}(z) &\overset{(8)}{=}& x_y^{-}\left(y_z^{-}(z) \right) 
\end{eqnarray*}
where we consider equating all combinations of roots involving combinations $\pm$.  As the roots are differentiable functions of their arguments, we can find and enumerate the solutions to each of the single variable equations numerically.

\subsection{Bounds on Gradient} \label{sec:grad}
\noindent
Here we show how to upper bound the gradient for $p$-value function for a fixed outcome $\widehat{\bp}$,
\begin{align*}
\left \vert\nabla \rho_{\phat}(\bp)\right \vert &= \left \vert\nabla \sum_{\qhat \in \Delta_{k,n} : \mathbb{P}_{\bp}(\qhat) \leq \mathbb{P}_{\bp}(\phat)  } \mathbb{P}_{\bp}  \left( \qhat  \right)\right \vert
\leq \left \vert \nabla \sum_{\qhat \in \Delta_{k,n}} \mathbb{P}_{\bp}  \left( \qhat  \right) \right \vert\\
&=\left \vert\begin{bmatrix}
      \sum_{\qhat }\frac{n!}{(n\widehat{p}_1-1)!(n\widehat{p}_2)!  \ldots (n\widehat{p}_k)!} p_1^{n\widehat{p}_1-1}p_2^{n\widehat{p}_2}  \cdots p_k^{n\widehat{p}_k}\\
       \sum_{\qhat }\frac{n!}{(n\widehat{p}_1)!(n\widehat{p}_2-1)!  \ldots (n\widehat{p}_k)!} p_1^{n\widehat{p}_1}p_2^{n\widehat{p}_2-1}  \cdots p_k^{n\widehat{p}_k}\\
       \vdots \\
       \sum_{\qhat }\frac{n!}{(n\widehat{p}_1)!(n\widehat{p}_2)!  \ldots (n\widehat{p}_k-1)!} p_1^{n\widehat{p}_1}p_2^{n\widehat{p}_2}  \cdots p_k^{n\widehat{p}_k-1}
     \end{bmatrix}\right \vert
     \leq \left \vert\begin{bmatrix}
      \sum_{\qhat }\frac{n!}{(n\widehat{p}_1-1)!(n\widehat{p}_2)!  \ldots (n\widehat{p}_k)!}\\
       \sum_{\qhat }\frac{n!}{(n\widehat{p}_1)!(n\widehat{p}_2-1)!  \ldots (n\widehat{p}_k)!} \\
       \vdots \\
       \sum_{\qhat }\frac{n!}{(n\widehat{p}_1)!(n\widehat{p}_2)!  \ldots (n\widehat{p}_k-1)!}
     \end{bmatrix}\right \vert = L
\end{align*}
where first inequality is followed by the fact that any partial derivatives of multinomial probability  $\mathbb{P}_{\bp}  \left( \qhat  \right)$ is positive. The second inequality is followed by the fact that $p_i \in [0,1], i = 1,2,\dots k$.

\end{document}